\RequirePackage{iftex}
\ifpdftex
\fi
\documentclass[sigconf,nonacm]{acmart}

\AtBeginDocument{%
  }

\setcopyright{none}
\copyrightyear{2026}
\acmYear{2026}
\acmDOI{}
\acmBooktitle{}
\acmISBN{}

\microtypecontext{spacing=nonfrench}

\usepackage{bm}
\usepackage{mathrsfs}
\usepackage{bbm}
\usepackage{amsmath}

\usepackage{amsthm}

\usepackage{algorithmic}
\usepackage{algorithm}

\usepackage{array}
\usepackage[caption=false,font=normalsize,labelfont=sf,textfont=sf]{subfig}
\usepackage{textcomp}
\usepackage{stfloats}
\usepackage{verbatim}
\usepackage{graphicx}
\graphicspath{{figures/}}

\definecolor{myorange}{RGB}{255,128,0}
\definecolor{myblue}{RGB}{0,0,255}
\definecolor{myred}{RGB}{255,0,0}
\definecolor{mypurple}{RGB}{147,112,219}
\definecolor{bestcol}{RGB}{243,200,70}   
\definecolor{secondcol}{RGB}{247,239,174}
\newcommand{\legendbox}[2]{\colorbox{#1}{\strut\textcolor{black}{#2}}}
\newcommand{\best}[1]{\cellcolor{bestcol}{\color{black}{#1}}}     
\newcommand{\second}[1]{\cellcolor{secondcol}{\color{black}{#1}}} 
\definecolor{customcolor1}{RGB}{156, 74, 9}
\definecolor{customcolor2}{RGB}{234, 112, 13}
\definecolor{customcolor3}{RGB}{249, 196, 153}

\usepackage{afterpage}
\usepackage{multirow}
\usepackage{makecell}
\usepackage{rotating} 
\usepackage{booktabs}
\usepackage{diagbox}
\usepackage{colortbl}
\usepackage{tabularx}
\usepackage{hhline}

\usepackage{enumitem}
\theoremstyle{plain}
\newtheorem{theorem}{Theorem}
\newtheorem{lemma}{Lemma}

\theoremstyle{remark}

\usepackage{xspace}

\begin{document}

\title{TaFD: Threat-Aware Frequency Decoupling for Adversarial~Robustness against Heterogeneous Attacks}

\author{Mengda Xie}
\affiliation{%
  \institution{School of Computer Science and Cyber Engineering, Guangzhou University}
  \city{Guangzhou}
  \state{Guangdong}
  \postcode{510006}
  \country{China}}

\author{Yiling He}
\affiliation{%
  \institution{Information Security Research Group, University College London}
  \city{London}
  \postcode{NW1 2AE}
  \country{United Kingdom}}

\author{Meie Fang}
\authornote{Corresponding author: Meie Fang (fme@gzhu.edu.cn).}
\email{fme@gzhu.edu.cn}
\affiliation{%
  \institution{School of Computer Science and Cyber Engineering, Guangzhou University}
  \city{Guangzhou}
  \state{Guangdong}
  \postcode{510006}
  \country{China}}

\begin{abstract}
Multi-threat robustness remains a fundamental challenge in deep learning. Although joint adversarial training (JAT) is widely adopted, it suffers from negative transfer under heterogeneous threats, particularly between $\ell_p$-bounded and semantic attacks. Through first-order gradient analysis, we formalize this as gradient incompatibility and theoretically establish the necessity of decoupled optimization. We further reveal that these conflicting threats exhibit separable spectral characteristics in the frequency domain. Motivated by this observation, we propose Threat-aware Frequency Decoupling (TaFD), a two-stage defense framework that reformulates JAT as a frequency-domain divide-and-conquer paradigm. TaFD first discovers latent threat domains via unsupervised clustering of attack spectral prototypes and trains a lightweight classifier for inference-time threat domain identification. Conditioned on the prediction, TaFD employs a Frequency-Conditional Convolution that learns threat-domain-specific spectral masks and routes each sample to the corresponding expert, enforcing structural parameter separation and alleviating optimization conflicts.

We validate TaFD on three representative image-classification benchmarks (CIFAR-10, CIFAR-100, and Tiny-ImageNet) and on two representative architectures (the convolutional ResNet and the hybrid-transformer MobileViT). Extensive results demonstrate that TaFD achieves more balanced robustness against heterogeneous attacks than existing JAT and frequency-domain baselines, improving average robust accuracy by approximately 11\% over the strongest baseline while maintaining leading clean accuracy.
\end{abstract}

\begin{CCSXML}
<ccs2012>
 <concept>
  <concept_id>10002978.10002986</concept_id>
  <concept_desc>Security and privacy~Systems security</concept_desc>
  <concept_significance>500</concept_significance>
 </concept>
 <concept>
  <concept_id>10010147.10010257</concept_id>
  <concept_desc>Computing methodologies~Machine learning</concept_desc>
  <concept_significance>300</concept_significance>
 </concept>
</ccs2012>
\end{CCSXML}

\ccsdesc[500]{Security and privacy~Systems security}
\ccsdesc[300]{Computing methodologies~Machine learning}

\keywords{Adversarial robustness, Joint adversarial training, Heterogeneous attacks, Frequency domain}

\maketitle

\section{Introduction}\label{sec1}

Deep neural networks (DNNs) achieve state-of-the-art performance on standard vision benchmarks and are increasingly used in safety-critical applications such as autonomous driving~\cite{DBLP:journals/ftcgv/JanaiGBG20} and medical imaging~\cite{DBLP:journals/mia/LitjensKBSCGLGS17}. Nevertheless, they remain vulnerable to adversarial examples, i.e., maliciously crafted inputs designed to induce misclassification through imperceptible perturbations or semantic manipulations~\cite{DBLP:journals/corr/SzegedyZSBEGF13, DBLP:journals/tcsv/XieHQF25}. This vulnerability poses a fundamental risk to system security and reliability, motivating robust, scalable defenses for trustworthy deployment.

\begin{figure}[!t]
\includegraphics[width=1\linewidth]{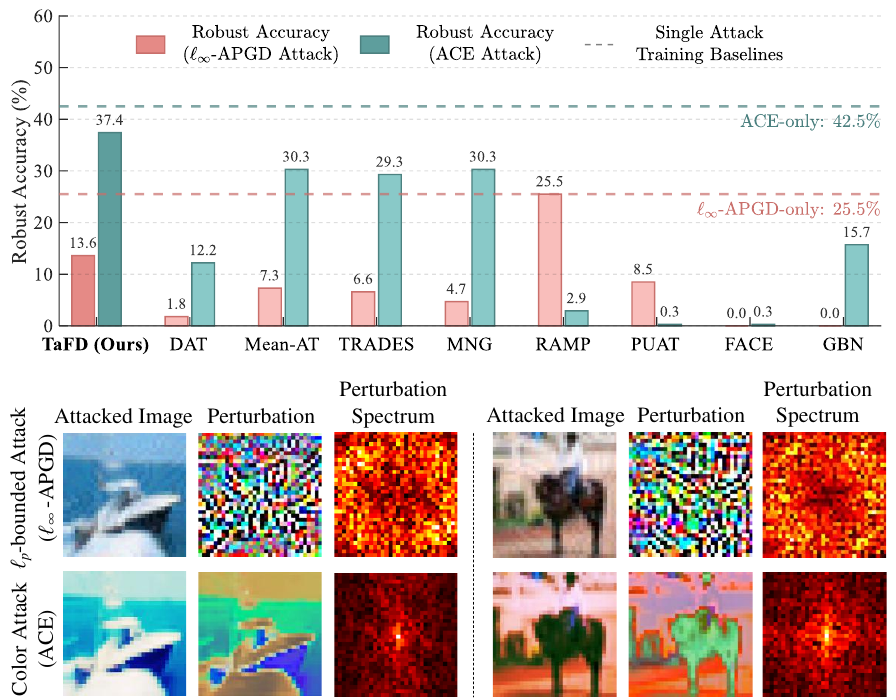}
\Description{Line charts illustrating trade-offs in joint adversarial training and images showing spectral signatures of heterogeneous attacks.}
\caption{\textbf{Trade-offs in joint adversarial training (Top) and spectral signatures of heterogeneous attacks (Bottom).} \textbf{Top:} Robust accuracies on CIFAR-100 under $\ell_\infty$-APGD and ACE. Dashed lines indicate single-attack training baselines. Existing methods exhibit pronounced trade-offs, whereas our method TaFD achieves the most balanced robustness. \textbf{Bottom:} Adversarial perturbations and their frequency spectra for two representative attack types. $\ell_p$-bounded attacks concentrate energy at higher spatial frequencies, while ACE induces globally consistent low-frequency chromatic shifts.}
\label{fig:tradeoff}
\end{figure}

Adversarial training constitutes the standard defense under a single threat model. However, real-world deployment demands multi-threat robustness, requiring models to maintain performance across a union of distinct attack types. To achieve this, joint adversarial training (JAT) is widely adopted. Yet, JAT is frequently hindered by negative transfer, wherein improving robustness to one threat degrades robustness to another. This trade-off is particularly acute when the union spans heterogeneous threats, such as $\ell_p$-bounded perturbations and semantic threats (e.g., color shifts or geometric transformations). This conflict arises because heterogeneous threats can impose incompatible optimization constraints on the shared model parameters.
Existing JAT paradigms address this conflict through loss-level aggregation or threat-coverage expansion, but neither resolves the underlying incompatibility. Aggregation-based approaches~\cite{tramer2019adversarial, maini2020adversarial, jiang2024ramp} combine multiple attacks via per-sample worst-case selection, averaging, or gradient projection, compressing heterogeneous gradients into a single update direction without reconciling the conflicting objectives. GBN~\cite{liu2024towards} moves beyond pure aggregation by introducing perturbation-specific batch normalization branches, yet the shared convolutional parameters remain subject to the same gradient conflict. Coverage-oriented approaches~\cite{DBLP:conf/iclr/Laidlaw0F21, poursaeed2021robustness, DBLP:journals/pami/ZhangYSY24} broaden the threat set without modeling the conflict structure, thereby likewise failing to mitigate negative transfer.

\begin{figure}[!t]
\centering
\includegraphics[width=1\linewidth]{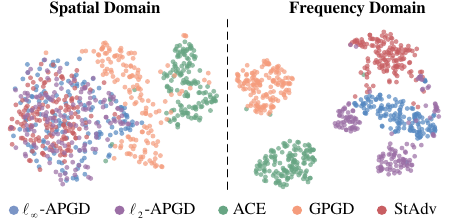}
\Description{t-SNE scatter plots comparing spatial and frequency domain feature representations of different adversarial attacks.}
\caption{\textbf{Spatial vs.\ frequency domain separability on CIFAR-100.} t-SNE visualization of perturbation features from five representative attacks ($\ell_\infty/\ell_2$-APGD~\cite{DBLP:conf/icml/Croce020a}, ACE~\cite{DBLP:journals/tifs/00010L23}, GPGD~\cite{DBLP:journals/pr/XiaoYFWL25}, StAdv~\cite{DBLP:conf/iclr/XiaoZ0HLS18}). \textbf{Left (Spatial):} Attacks are largely entangled with no clear cluster boundaries. \textbf{Right (Frequency):} All five attacks form well-separated clusters, demonstrating the frequency domain's superior discriminability.}
\label{fig:tsne_visualization}
\end{figure}

To formalize this conflict, we develop a first-order gradient analysis that characterizes negative transfer as a form of gradient incompatibility between heterogeneous attacks. We contrast two canonical threat classes targeting distinct perturbation patterns: $\ell_p$-bounded perturbations, which produce local, fine-grained pixel-level noise; and color transforms, which induce global, structured appearance shifts. Under the stated assumptions, the analysis shows that enforcing invariance to one threat increases the minimal gradient norm required for the margin, thereby raising the other threat's first-order robust-risk coefficient and making a unified objective prone to conflict.
Empirical results on CIFAR-100 (Fig.~\ref{fig:tradeoff}, Top), where these two threat classes are instantiated by $\ell_\infty$-APGD~\cite{DBLP:conf/icml/Croce020a} and ACE~\cite{DBLP:journals/tifs/00010L23} respectively, support this analysis. RAMP matches the $\ell_\infty$-APGD-only baseline at 25.5\% but ACE accuracy collapses to 2.9\%. Conversely, MNG preserves 30.3\% ACE accuracy but $\ell_\infty$-APGD drops to 4.7\%, over 20 percentage points below the single-attack baseline. These trade-offs empirically align with the gradient incompatibility formalized by our analysis, raising a natural question: can these conflicting objectives be disentangled in an alternative representation space?

Since $\ell_p$-bounded perturbations and color transforms operate at fundamentally different spatial scales, the Fourier transform naturally maps them to separable frequency bands, enabling their disentanglement. Indeed, as visualized in Fig.~\ref{fig:tradeoff} (Bottom), $\ell_p$-bounded attacks concentrate perturbation energy at higher spatial frequencies, while ACE induces globally consistent low-frequency chromatic shifts.
More broadly, since diverse attack paradigms, including geometric deformations, chromatic shifts, pixel-level noise, and on-manifold edits, manipulate images with characteristically different spatial structures, we hypothesize that this spectral separability extends beyond the canonical pair. To verify this, we visualize perturbation features from five representative attacks spanning $\ell_p$-bounded, color, spatial, and on-manifold threats. As shown in Fig.~\ref{fig:tsne_visualization}, frequency-domain t-SNE embeddings yield well-separated, compact clusters for all five attack types, whereas their spatial-domain counterparts are largely entangled with no discernible boundaries across most threat categories. This contrast suggests the frequency domain's substantially greater discriminative power, providing a principled foundation for threat-aware decoupling of robustness objectives that are otherwise entangled in the pixel domain.

Motivated by the spectral insight above, we propose Threat-aware Frequency Decoupling~(TaFD), a unified defense framework that reformulates JAT as a frequency-domain divide-and-conquer paradigm. Concretely, TaFD instantiates a \textbf{Diagnosis--Dispatch} architecture. The \textbf{diagnosis stage} dynamically discovers latent threat domains, providing a conditioning signal for the dispatch stage's subsequent frequency-domain processing. To achieve this, we derive spectral prototypes of known attacks and cluster them into latent threat domains, which are then used to train a lightweight classifier for threat-domain identification during inference.
The \textbf{dispatch stage} employs the predicted threat-domain label as an explicit conditioning signal for frequency-domain decoupling. It is implemented by a novel Frequency-Conditional Convolution~(FC-Conv) layer. FC-Conv jointly learns threat-domain-specialized spectral masks to isolate threat-relevant spectral regions and applies sample-level hard routing of the masked features to the corresponding expert, with gradient accessibility ensured via BPDA~\cite{DBLP:conf/icml/AthalyeC018} during adversarial evaluation, thereby enforcing frequency-grounded structural parameter separation. This reduces reliance on a single pixel-domain gradient reconciliation path and substantially alleviates optimization conflicts across heterogeneous threats. As shown in Fig.~\ref{fig:tradeoff} (Top), TaFD mitigates negative transfer, achieving the most balanced robustness with 13.6\% on $\ell_\infty$-APGD and 37.4\% on ACE.

\par Our contributions can be summarized as follows:
\begin{itemize}
    \item We identify negative transfer in JAT under heterogeneous threats and introduce a first-order gradient analysis that formalizes this conflict as gradient incompatibility, theoretically motivating decoupled optimization.
    \item We empirically demonstrate that the frequency domain serves as a natural representation space for disentangling heterogeneous threats, with diverse attack paradigms exhibiting distinct and well-separated spectral signatures.
    \item We propose TaFD, a Diagnosis--Dispatch framework that decouples heterogeneous defense in the frequency domain via latent threat-domain discovery and threat-specialized spectral expert routing, effectively mitigating optimization conflicts across heterogeneous threats.
    \item Extensive experiments on three benchmarks and two architectures show that TaFD achieves more balanced robustness against heterogeneous attacks than existing JAT and frequency-domain baselines, improving average robust accuracy by up to 11.3 percentage points over the strongest competitor while maintaining competitive clean accuracy.
\end{itemize}

\section{Related Work}\label{sec2}

\subsection{Adversarial Attacks}\label{sec2.1}
Adversarial attacks deceive DNNs via imperceptible or plausible perturbations. \(\ell_p\)-bounded attacks exploit model oversensitivity to local features,
evolving from L-BFGS~\cite{DBLP:journals/corr/SzegedyZSBEGF13} and
C\&W~\cite{DBLP:conf/sp/Carlini017} to efficient first-order methods like
FGSM~\cite{DBLP:journals/corr/GoodfellowSS14}, iterative
variants~\cite{DBLP:conf/iclr/KurakinGB17a}, and
PGD~\cite{DBLP:conf/iclr/MadryMSTV18}. These perturbations primarily alter
high-frequency textures rather than global structure.

In contrast to \(\ell_p\)-bounded attacks, semantic attacks apply structured, meaning-preserving transformations that yield macroscopic yet plausible modifications without tight norm constraints. This category encompasses three principal threat types. Spatial attacks modify image geometry via optimized rotations and translations~\cite{DBLP:conf/icml/EngstromTTSM19} or flow-based
deformations~\cite{DBLP:conf/iclr/XiaoZ0HLS18}. Parameterized attacks optimize latent variables to generate on-manifold perturbations. Early work in GAN latent space~\cite{zhao2017generating, DBLP:conf/eccv/QiuXYYLL20} has evolved into diffusion-based methods that inject targets during reverse sampling (AdvDiff)~\cite{DBLP:conf/eccv/DaiLX24} or optimize along the generative manifold (ACA)~\cite{DBLP:conf/nips/ChenLWJDZ23}. These on-manifold attacks may bypass standard adversarial training, leading to higher success rates and excess risk~\cite{DBLP:journals/pr/XiaoYFWL25}. Color transformations constitute a prominent and extensively studied subclass, incurring large \(\ell_p\) distances while capturing realistic photometric variations. Hosseini and Poovendran~\cite{DBLP:conf/cvpr/HosseiniP18} first demonstrated adversarial vulnerability to hue/saturation shifts in HSV space. Building on this direction, subsequent works developed colorization-based
optimization (cAdv)~\cite{DBLP:conf/iclr/BhattadCLLF20}, selective CIELAB modifications (ColorFool)~\cite{DBLP:conf/cvpr/ShamsabadiSC20}, and parameterized global color mapping (ReColorAdv)~\cite{DBLP:conf/nips/LaidlawF19}. Further extensions include ACE~\cite{DBLP:journals/tifs/00010L23}, ALA~\cite{DBLP:conf/mm/0001S0JZFLP23}, and RetouchUAA~\cite{DBLP:journals/tcsv/XieHQF25}.

The divergence between \(\ell_p\)-bounded and semantic attacks highlights a key challenge for unified robustness. These paradigms impose conflicting optimization objectives: robustness to local, fine-grained \(\ell_p\) perturbations versus invariance to macroscopic, semantic transformations. Resolving this optimization conflict is crucial for unified defenses.

\subsection{Adversarial Defenses}\label{sec2.2}
We first review defenses designed for individual threat types, then survey methods addressing multiple threats, and finally discuss frequency-domain approaches.

\subsubsection{Single-Attack Defenses}\label{sec2.2.1}

\begin{sloppypar}
Traditional adversarial defense primarily targets single \(\ell_p\)-bounded threats. Adversarial training (AT)~\cite{DBLP:conf/iclr/MadryMSTV18} is the leading paradigm, reformulating risk minimization as a min--max problem to induce robust features. Yet, AT is computationally expensive and often degrades clean accuracy. To address this, TRADES~\cite{DBLP:conf/icml/ZhangYJXGJ19} balances clean and robust errors via KL-regularization. Free AT~\cite{DBLP:conf/nips/ShafahiNG0DSDTG19} accelerates training by updating parameters and perturbations simultaneously, while Fast AT~\cite{DBLP:conf/iclr/WongRK20} uses random initialization with FGSM for low-cost robustness. However, \(\ell_p\) robustness rarely transfers to heterogeneous threats, necessitating joint defenses.
\end{sloppypar}

\subsubsection{Unified Defense Against Multiple Threats}\label{sec2.2.2}

To overcome the tendency of single-threat defenses to overspecialize, JAT aims to achieve multi-threat robustness—maintaining performance across a diverse union of attacks. Prominent aggregation-based methods began with simple loss aggregation, such as selecting the highest-loss example (MAX) or averaging losses (AVG)~\cite{tramer2019adversarial}. Subsequent works introduced geometry-aware mechanisms, such as MSD~\cite{maini2020adversarial}, which selects the max-loss ascent direction, and E-AT~\cite{croce2022adversarial}, which exploits \(\ell_1\)/\(\ell_{\infty}\) geometric relations. Recent feature-level defenses include GBN~\cite{liu2024towards}, which employs multi-branch batch normalization with a gated sub-network to identify perturbation types and normalize features accordingly, and RAMP~\cite{jiang2024ramp}, which balances multi-norm robustness via gradient
projection and logit pairing.
In parallel, coverage-oriented methods generate more diverse attacks. PAT~\cite{DBLP:conf/iclr/Laidlaw0F21} uses LPIPS for perceptual alignment. Semantic-space AT~\cite{poursaeed2021robustness} manipulates StyleGAN's latent variables to create multi-dimensional semantic variations. PUAT~\cite{DBLP:journals/pami/ZhangYSY24} employs distribution alignment against unconstrained adversarial examples (UAEs).

In summary, existing methods struggle to jointly address heterogeneous \(\ell_p\) and semantic threats. Aggregation schemes combine incompatible objectives, while coverage schemes overlook this incompatibility. Both hinder the resolution of negative transfer arising from gradient incompatibility. Explicitly modeling and reconciling these conflicting objectives at the optimization level is a promising direction for unified robustness.

\subsubsection{Defenses from a Frequency-Domain Perspective}\label{sec2.2.3}

Frequency-domain defenses exploit the distinct spectral characteristics of adversarial perturbations. Early work explored JPEG-based preprocessing~\cite{DBLP:conf/iclr/GuoRCM18, liu2019feature} to suppress adversarial energy while preserving task-critical features. Subsequent approaches manipulate spectral representations directly. Zhang et al.~\cite{zhang2019adversarial} introduced differentiable high-frequency attenuation, while the FACE framework~\cite{niu2023defense} applies band-specific DWT compression. A separate line distinguishes amplitude from phase: DAT~\cite{DBLP:conf/nips/LiLW0024} mixes training amplitudes with generative adversarial amplitudes, compelling the model to rely on robust phase information.
However, despite their simplicity, early preprocessing-based methods~\cite{DBLP:conf/iclr/GuoRCM18, zhang2019adversarial, niu2023defense} are sensitive to hyperparameters and often degrade clean accuracy, while training-integrated approaches like DAT~\cite{DBLP:conf/nips/LiLW0024} introduce new complexities. Critically, existing methods predominantly target single \(\ell_p\)-bounded threats. The integration of spectral biases into unified heterogeneous defense frameworks remains an open challenge. In contrast, our work leverages the frequency domain as a structured representation to actively decouple conflicting optimization objectives and mitigate negative transfer in heterogeneous JAT.

\section{Negative Transfer in JAT: A First-Order Analysis}
\label{sec:theory-neg-transfer-en}

This section presents a first-order gradient analysis that formalizes a conflict between two canonical heterogeneous threats: $\ell_p$-bounded perturbations and color transforms. Formalizing this incompatibility helps explain the negative transfer phenomenon in JAT and theoretically motivates decoupled optimization, underpinning the design of the TaFD defense framework.

\subsection{Setting and Preliminaries}
We consider random pairs $(\bm{x},y)\sim \mathcal{D}$, where $\mathcal{D}$ is a data distribution over the input space $\mathbb{R}^d$, which is the $d$-dimensional real vector space, and the label space $\mathcal{Y}$. We denote the loss function by $\mathcal{L}(\bm{\theta};\bm{x},y)$ (e.g., cross-entropy), where $\bm{\theta}$ denotes the model
parameters, and define the input gradient $\bm{g}(\bm{x})$ as $\nabla_{\bm{x}} \mathcal{L}(\bm{\theta};\bm{x},y)$. We fix the norm order $p\in(1,\infty)$ and denote by $q$ its H\"older conjugate, satisfying $\frac{1}{p} + \frac{1}{q} = 1$.
Throughout this section, $\langle \cdot,\cdot\rangle$ denotes the standard inner product on $\mathbb{R}^d$, and $\sup$ and $\inf$ denote the supremum and infimum.

\noindent\textbf{Linearized color subspace.}
We consider the family $\{T_{\bm{\gamma}}\}_{\bm{\gamma}\in\mathbb{R}^m}$ of differentiable color transforms on $\mathbb{R}^d$, normalized by $T_{\bm{0}}(\bm{x})=\bm{x}$ so that $\bm{\gamma}=\bm{0}$ serves as the reference point for local analysis. The local color directions are defined as:
\begin{equation*}
\bm{v}_i(\bm{x}):=\left.\frac{\partial\,T_{\bm{\gamma}}(\bm{x})}{\partial \gamma_i}\right|_{\bm{\gamma}=\bm{0}}\in\mathbb{R}^d,\qquad i=1,\dots,m.
\end{equation*}
The linearized color subspace $V_c(\bm{x})$ is defined as:
\begin{equation*}
V_c(\bm{x}):=\operatorname{span}\,\{\,\bm{v}_1(\bm{x}),\dots,\bm{v}_m(\bm{x})\,\}\subset\mathbb{R}^d.
\end{equation*}
Intuitively, $V_c(\bm{x})$ represents the tangent space of the color transformation manifold at $\bm{x}$, capturing all possible directions of local color change. We focus on this linearized subspace to facilitate a tractable first-order analysis.

\noindent\textbf{Robust risks.}
We define the $\ell_p$ robust risk $R_p(\bm{\theta},\varepsilon)$ and the linearized color robust risk $R_c(\bm{\theta},\eta)$, representing the expected worst-case loss under respective perturbations of magnitude $\varepsilon>0$ and $\eta>0$:
\begin{align}
R_p(\bm{\theta},\varepsilon) &:= \mathbb{E}_{(\bm{x},y)\sim\mathcal{D}}\Big[\,\sup_{\|\bm{\delta}\|_{p}\le \varepsilon} \mathcal{L}(\bm{\theta};\bm{x}+\bm{\delta},y)\,\Big], \label{eq:Rp-def}\\
R_c(\bm{\theta},\eta) &:= \mathbb{E}_{(\bm{x},y)\sim\mathcal{D}}\Big[\,\sup_{\substack{\bm{\delta}_c\in V_c(\bm{x})\\ \|\bm{\delta}_c\|_{p}\le \eta}} \mathcal{L}(\bm{\theta};\bm{x}+\bm{\delta}_c,y)\,\Big]. \label{eq:Rc-def}
\end{align}
Here, $\varepsilon,\eta>0$ are scalar radii, and $\bm{\delta},\bm{\delta}_c\in\mathbb{R}^d$ are perturbation vectors with $\bm{\delta}_c\in V_c(\bm{x})$.
We use the $\ell_p$ norm constraint for $R_c$ to enable direct comparison of local sensitivities. $R_c(\bm{\theta},\eta)$ serves as a first-order approximation of the robustness against the color transform family $\{ T_{\bm{\gamma}} \}$. Throughout, $\mathbb{E}[\cdot]$ denotes expectation over the joint draw $(\bm{x},y)\sim\mathcal{D}$; for brevity we omit the subscript.

\noindent\textbf{Geometric notions.}
We now introduce three geometric notions for an arbitrary subspace $V\subset\mathbb{R}^d$. These will form the analytical toolkit for our main results when specialized to the linearized color subspace, $V=V_c(\bm{x})$.

The restricted dual norm quantifies the first-order sensitivity of the loss to perturbations constrained to $V$:
\begin{equation*}
    \|\bm{g}\|_{q;V} \;:=\; \sup\{\langle \bm{g},\bm{z}\rangle:\ \bm{z}\in V,\ \|\bm{z}\|_{p}\le 1\}.
\end{equation*}
The orthogonal complement of the subspace is
\begin{equation*}
    V^\perp:=\{\bm{g}\in\mathbb{R}^d:\ \langle \bm{g},\bm{z}\rangle=0,\ \forall\,\bm{z}\in V\}.
\end{equation*}
If $\bm{g}(\bm{x})\in V^\perp$, then $\|\bm{g}(\bm{x})\|_{q;V}=0$, implying first-order invariance with respect to $V$. Finally, the $\ell_p$ distance from a vector $\bm{u}$ to $V$ is given by:
\begin{equation*}
    \operatorname{dist}_p(\bm{u},V):=\inf_{\bm{z}\in V}\|\bm{u}-\bm{z}\|_{p}.
\end{equation*}

\noindent\textbf{Assumptions.}
We use the following mild conditions.
\begin{enumerate}
    \item[(A1)] \emph{Regularity.} Standard smoothness and integrability conditions hold for $\mathcal{L}(\bm{\theta};\bm{x},y)$. The first-order Taylor remainder $r_{\bm{x}}(\bm{\delta})$ satisfies $\mathbb{E}\big[\sup_{\|\bm{\delta}\|_p \le r} |r_{\bm{x}}(\bm{\delta})|\big] = o(r)$ as $r\to 0$.
    \item[(A2)] \emph{Task margin.} We assume the existence of a task-relevant direction $\bm{u}(\bm{x})\neq \bm{0}$ and a required margin $C(\bm{x})>0$. With positive probability over $(\bm{x},y)\sim\mathcal{D}$, the input gradient $\bm{g}(\bm{x})$ must project onto this direction with at least this margin, satisfying: $$\langle \bm{g}(\bm{x}),\bm{u}(\bm{x})\rangle \;\ge\; C(\bm{x}).$$
    \item[(A3)] \emph{Misalignment.} With positive probability, $V_c(\bm{x}) \neq \{\bm{0}\}$ and $\bm{u}(\bm{x})$ is misaligned with $V_c(\bm{x})$:
    \begin{equation*}
    \operatorname{dist}_p\!\big(\bm{u}(\bm{x}),V_c(\bm{x})\big)\in\big(0,\ \|\bm{u}(\bm{x})\|_{p}\big).
    \end{equation*}
For $p=2$, this corresponds to $\bm{u}(\bm{x})$ being neither contained within $V_c(\bm{x})$ nor completely orthogonal to it; for general $p$, it imposes the analogous non-degenerate distance condition needed for the comparison below.
\end{enumerate}

\subsection{First-Order Incompatibility}

We analyze the first-order behavior of the robust risks.

\begin{lemma}[First-order risk expansions]
\label{lem:expansion}
Under (A1), robust risks admit first-order expansions at $\varepsilon=0$ and $\eta=0$:
\begin{equation}
\label{eq:exp-mean}
\begin{aligned}
R_p(\bm{\theta},\varepsilon)
&= \mathbb{E}[\mathcal{L}(\bm{\theta};\bm{x},y)] + \varepsilon\,\mathbb{E}\big[\|\bm{g}(\bm{x})\|_q\big] + o(\varepsilon),\\
R_c(\bm{\theta},\eta)
&= \mathbb{E}[\mathcal{L}(\bm{\theta};\bm{x},y)] + \eta\,\mathbb{E}\big[\|\bm{g}(\bm{x})\|_{q;V_c(\bm{x})}\big] + o(\eta).
\end{aligned}
\end{equation}
\end{lemma}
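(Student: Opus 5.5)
The plan is to work pointwise in $(\bm{x},y)$, using the first-order Taylor expansion from (A1), and then take expectations. For fixed $(\bm{x},y)$ write
\[
\mathcal{L}(\bm{\theta};\bm{x}+\bm{\delta},y)=\mathcal{L}(\bm{\theta};\bm{x},y)+\langle \bm{g}(\bm{x}),\bm{\delta}\rangle+r_{\bm{x}}(\bm{\delta}).
\]
Taking the supremum over $\|\bm{\delta}\|_p\le\varepsilon$ gives the sandwich
\[
\Big|\sup_{\|\bm{\delta}\|_p\le\varepsilon}\mathcal{L}(\bm{\theta};\bm{x}+\bm{\delta},y)-\mathcal{L}(\bm{\theta};\bm{x},y)-\sup_{\|\bm{\delta}\|_p\le\varepsilon}\langle \bm{g}(\bm{x}),\bm{\delta}\rangle\Big|\le \sup_{\|\bm{\delta}\|_p\le\varepsilon}|r_{\bm{x}}(\bm{\delta})|.
\]
This uses the elementary fact that $|\sup(f+h)-\sup f|\le\sup|h|$.

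\textbf{The $\ell_p$ term.} Hölder duality evaluates the linear supremum: since $\|\cdot\|_q$ is the dual norm of $\|\cdot\|_p$,
\[
\sup_{\|\bm{\delta}\|_p\le\varepsilon}\langle \bm{g},\bm{\delta}\rangle=\varepsilon\|\bm{g}\|_q.
\]
The supremum is attained at $\delta_i=\varepsilon\,\mathrm{sign}(g_i)|g_i|^{q-1}/\|\bm{g}\|_q^{q-1}$, which is valid for $p\in(1,\infty)$. Taking expectations, which is legitimate by the integrability part of (A1), gives
\[
\big|R_p(\bm{\theta},\varepsilon)-\mathbb{E}[\mathcal{L}]-\varepsilon\,\mathbb{E}\|\bm{g}(\bm{x})\|_q\big|\le \mathbb{E}\Big[\sup_{\|\bm{\delta}\|_p\le\varepsilon}|r_{\bm{x}}(\bm{\delta})|\Big]=o(\varepsilon).
\]
The final equality is exactly the remainder condition in (A1).

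\textbf{The color term.} I will repeat the argument with the feasible set restricted to $\{\bm{\delta}_c\in V_c(\bm{x}),\ \|\bm{\delta}_c\|_p\le\eta\}$. By positive homogeneity and the definition of the restricted dual norm,
\[
\sup_{\bm{\delta}_c\in V_c(\bm{x}),\,\|\bm{\delta}_c\|_p\le \eta}\langle \bm{g},\bm{\delta}_c\rangle=\eta\,\|\bm{g}\|_{q;V_c(\bm{x})}.
\]
This holds because $V_c(\bm{x})$ is a linear subspace, so scaling by $\eta$ maps the unit ball of $V_c(\bm{x})$ onto the $\eta$-ball. If $V_c(\bm{x})=\{\bm{0}\}$, both sides are $0$, so the identity still holds. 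The remainder is controlled by monotonicity: the restricted set is contained in the full ball, so
\[
\sup_{\bm{\delta}_c}|r_{\bm{x}}(\bm{\delta}_c)|\le\sup_{\|\bm{\delta}\|_p\le\eta}|r_{\bm{x}}(\bm{\delta})|,
\]
and (A1) again yields $o(\eta)$ after taking expectations.

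\textbf{Expected obstacle.} The delicate step is measurability and integrability of the pointwise suprema, i.e.\ justifying that $\mathbb{E}$ may be applied to each term in the sandwich. This concerns $\mathbb{E}\|\bm{g}(\bm{x})\|_q<\infty$, and in the color case also the measurable dependence of $V_c(\bm{x})$ on $\bm{x}$. I would defer these points to the "standard smoothness and integrability" clause of (A1), noting that continuity of $\bm{\delta}\mapsto\mathcal{L}$ lets each supremum be taken over a countable dense subset, which ensures measurability. I would also note that $\|\bm{g}\|_{q;V}\le\|\bm{g}\|_q$, so integrability of the color term follows from that of the $\ell_p$ term.
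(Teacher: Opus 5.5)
Your proposal is correct and follows the same route as the paper: a pointwise first-order Taylor expansion, evaluation of the linearized supremum via the dual norm $\|\cdot\|_q$ (respectively the restricted dual norm $\|\cdot\|_{q;V_c(\bm{x})}$), and the remainder condition in (A1) to obtain the $o(\varepsilon)$ and $o(\eta)$ terms after taking expectations. You make explicit two steps the paper leaves implicit: the bound $|\sup(f+h)-\sup f|\le\sup|h|$, which justifies replacing the robust loss by its linearization, and the fact that the restricted feasible set lies inside the full $\ell_p$ ball, which lets the (A1) remainder condition also cover the color case the paper calls ``analogous.''
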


\begin{proof}
For any $(\bm{x}, y)$, the first-order Taylor expansion is
\begin{align*}
\mathcal{L}(\bm{\theta};\bm{x}+\bm{\delta},y) &= \mathcal{L}(\bm{\theta};\bm{x},y) + \langle \bm{g}(\bm{x}),\bm{\delta}\rangle + r_{\bm{x}}(\bm{\delta}).
\end{align*}
Maximizing the linearized robust loss for $R_p$ yields:
\begin{align*}
\sup_{\|\bm{\delta}\|_{p}\le\varepsilon} \big(\mathcal{L}(\bm{\theta};\bm{x},y) + \langle \bm{g}(\bm{x}),\bm{\delta}\rangle\big) &= \mathcal{L}(\bm{\theta};\bm{x},y) + \varepsilon\,\|\bm{g}(\bm{x})\|_q,
\end{align*}
by the definition of the dual norm. By (A1), the expectation of the remainder term is $o(\varepsilon)$. This confirms the expansion for $R_p(\bm{\theta},\varepsilon)$. The argument for $R_c(\bm{\theta},\eta)$ is analogous, utilizing the restricted dual norm.
\end{proof}

\begin{figure}[t]
\centering
\includegraphics[width=0.6\linewidth]{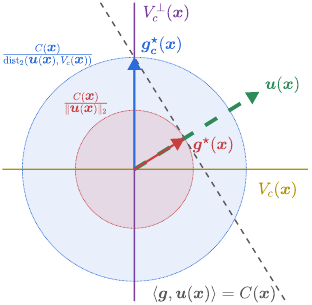}
\Description{A geometric illustration diagram showing the negative transfer effect and gradient norms in the Euclidean space.}
\caption{Geometric illustration of negative transfer at a fixed $\bm{x}$ in the Euclidean case ($p=q=2$). The horizontal axis is the linearized color subspace $V_c(\bm{x})$ and the vertical axis is its orthogonal complement $V_c^\perp(\bm{x})$. The dashed ray denotes the task-relevant direction $\bm{u}(\bm{x})$, and the dashed line is the margin constraint $\langle \bm{g},\bm{u}(\bm{x})\rangle = C(\bm{x})$, where $C(\bm{x})>0$ is the required margin. The arrow $\bm{g}^\star(\bm{x})$ is the unconstrained minimum-norm gradient that satisfies the margin, lying on the smallest feasible $\ell_2$ ball of radius $\frac{C(\bm{x})}{\|\bm{u}(\bm{x})\|_2}$ (inner shaded disk). Enforcing color invariance restricts $\bm{g}$ to $V_c^\perp(\bm{x})$, producing the invariant minimum-norm solution $\bm{g}^\star_c(\bm{x})$ on a larger ball of radius $\frac{C(\bm{x})}{\operatorname{dist}_2(\bm{u}(\bm{x}),V_c(\bm{x}))}$ (outer shaded disk). Under assumption (A3), this radius is strictly larger, illustrating the increased gradient norm required by color invariance.}
\label{fig:first-order-geometry}
\end{figure}

\noindent\textbf{Implication.}
Lemma~\ref{lem:expansion} highlights a fundamental tension. Minimizing $R_p$~\eqref{eq:Rp-def} requires minimizing the overall norm $\|\bm{g}(\bm{x})\|_q$. Minimizing $R_c$~\eqref{eq:Rc-def} requires minimizing the restricted norm $\|\bm{g}(\bm{x})\|_{q;V_c(\bm{x})}$, achieving color invariance, i.e., $\bm{g}(\bm{x})\in V_c^\perp(\bm{x})$.
We now show that enforcing this invariance increases the minimum required overall norm, formalized by analyzing the optimization under margin constraint (A2).

\begin{lemma}[Cost of invariance]
\label{lem:minnorm}
Fix $(\bm{x},y)$ as in (A2). Among all vectors $\bm{g}$ satisfying the margin constraint $\langle \bm{g},\bm{u}(\bm{x})\rangle\ge C(\bm{x})$, the following minimum norms hold:
\begin{align}
\text{Unconstrained:}\quad
\min_{\bm{g}\in\mathbb{R}^d}\ \|\bm{g}\|_q
&= \frac{C(\bm{x})}{\|\bm{u}(\bm{x})\|_{p}},
\label{eq:min-no-const}\\
\text{Color-invariant:}\quad
\min_{\bm{g}\in V_c^\perp(\bm{x})}\ \|\bm{g}\|_q
&\ge \frac{C(\bm{x})}{\operatorname{dist}_p\!\big(\bm{u}(\bm{x}),V_c(\bm{x})\big)} .
\label{eq:min-with-const}
\end{align}
The unconstrained problem \eqref{eq:min-no-const} admits a unique minimizer $\bm{g}^{\star}(\bm{x})$, and the color-invariant problem \eqref{eq:min-with-const} admits at least one minimizer, denoted $\bm{g}^{\star}_{c}(\bm{x})$.
\end{lemma}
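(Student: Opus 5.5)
The plan is to treat both parts of Lemma~\ref{lem:minnorm} as dual-norm (Hölder) computations, the second done on the quotient by $V_c(\bm{x})$. Fix $(\bm{x},y)$ and abbreviate $\bm{u}=\bm{u}(\bm{x})$, $C=C(\bm{x})$, $V=V_c(\bm{x})$.

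\emph{Unconstrained problem.} Hölder's inequality $\langle \bm{g},\bm{u}\rangle\le \|\bm{g}\|_q\|\bm{u}\|_p$ together with the margin constraint gives $\|\bm{g}\|_q\ge C/\|\bm{u}\|_p$ for every feasible $\bm{g}$.
\begin{itemize}
\item \textbf{Attainment.} The Hölder equality case supplies the minimizer. Set $\bm{g}^\star_i = \lambda\,\mathrm{sign}(u_i)|u_i|^{p-1}$ with $\lambda>0$ chosen so that $\langle \bm{g}^\star,\bm{u}\rangle=C$. Then $\|\bm{g}^\star\|_q = C/\|\bm{u}\|_p$.
\item \textbf{Uniqueness.} Since $1<q<\infty$, $\ell_q$ is strictly convex. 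Equality in Hölder forces $\bm{g}$ to be a nonnegative multiple of the duality map of $\bm{u}$, and the margin then fixes the scale. Alternatively, a strictly convex norm minimized over a closed convex half-space has a unique minimizer.
\end{itemize}

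\emph{Color-invariant problem: the inequality.} For $\bm{g}\in V^\perp$ and any $\bm{z}\in V$ we have $\langle \bm{g},\bm{u}\rangle = \langle \bm{g},\bm{u}-\bm{z}\rangle \le \|\bm{g}\|_q\|\bm{u}-\bm{z}\|_p$. Taking the infimum over $\bm{z}\in V$ yields $C\le \langle\bm{g},\bm{u}\rangle\le \|\bm{g}\|_q\,\operatorname{dist}_p(\bm{u},V)$. By (A3) the distance is strictly positive, so we may divide to get
\[
\|\bm{g}\|_q\ \ge\ \frac{C}{\operatorname{dist}_p(\bm{u},V)} .
\]
This bound holds for every feasible $\bm{g}$, and it is the inequality \eqref{eq:min-with-const}.

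\emph{Color-invariant problem: existence of a minimizer.} This is the main point requiring care. The plan has three steps.
\begin{itemize}
\item \textbf{Nonempty feasible set.} Because $\bm{u}\notin V$ (the distance is positive), $\bm{u}$ has a nonzero component outside $V$. Hence there is some $\bm{h}\in V^\perp$ with $\langle\bm{h},\bm{u}\rangle>0$; for example, the Euclidean orthogonal projection of $\bm{u}$ onto $V^\perp$ satisfies $\langle \bm{h},\bm{u}\rangle=\|\bm{h}\|_2^2>0$. Scaling $\bm{h}$ makes it feasible.
\item \textbf{Compactness.} Intersecting the feasible set $\{\bm{g}\in V^\perp:\langle\bm{g},\bm{u}\rangle\ge C\}$ with the sublevel set $\{\|\bm{g}\|_q\le \|\bm{h}\|_q\}$ gives a closed, bounded, nonempty set in finite dimension, hence a compact one.
\item \textbf{Attainment.} The continuous function $\|\cdot\|_q$ attains its minimum on this compact set; call the minimizer $\bm{g}^\star_c(\bm{x})$.
\end{itemize}
Uniqueness is not claimed here and is not needed.

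I would also note that the inequality is in fact an equality. By Hahn--Banach/duality, $\operatorname{dist}_p(\bm{u},V)=\sup\{\langle\bm{g},\bm{u}\rangle:\bm{g}\in V^\perp,\|\bm{g}\|_q\le1\}$, which would give equality in \eqref{eq:min-with-const}. Only the lower bound is asserted, however, so the argument stops after establishing the bound and existence.
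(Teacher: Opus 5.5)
Your proposal is correct and follows the paper's proof: H\"older's inequality gives the unconstrained bound, and the identity $\langle \bm{g},\bm{u}\rangle=\langle \bm{g},\bm{u}-\bm{z}\rangle$ for $\bm{g}\in V_c^\perp(\bm{x})$, followed by an infimum over $\bm{z}\in V_c(\bm{x})$, gives the invariant bound. Your extra steps (the explicit H\"older-equality minimizer, uniqueness via strict convexity of $\ell_q$, and existence via compactness) supply what the paper only asserts, and they correctly use $\bm{u}(\bm{x})\notin V_c(\bm{x})$ at the fixed $\bm{x}$; this hypothesis comes from (A3), not (A2), and without it the invariant feasible set is empty, so state explicitly that the fixed $(\bm{x},y)$ satisfies both assumptions.
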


\begin{proof}[Proof of Eq.~\eqref{eq:min-no-const}]
By H\"older's inequality, we have
\begin{equation*}
\langle \bm{g},\bm{u}(\bm{x})\rangle \le \|\bm{g}\|_q\,\|\bm{u}(\bm{x})\|_{p},
\end{equation*}
which implies $\|\bm{g}\|_q \ge \frac{C(\bm{x})}{\|\bm{u}(\bm{x})\|_{p}}$.
Equality holds when $\bm{g}$ satisfies the H\"older alignment condition with $\bm{u}(\bm{x})$.
\end{proof}

\begin{proof}[Proof of Eq.~\eqref{eq:min-with-const} (Lower bound)]
If $\bm{g}\in V_c^\perp(\bm{x})$, then $\langle \bm{g},\bm{z}\rangle=0$ for any $\bm{z}\in V_c(\bm{x})$. Hence $$\langle \bm{g},\bm{u}(\bm{x})\rangle
= \langle \bm{g},\bm{u}(\bm{x})-\bm{z}\rangle
\le \|\bm{g}\|_q\,\|\bm{u}(\bm{x})-\bm{z}\|_{p}.$$Taking $\inf_{\bm z\in V_c(\bm{x})}$ yields $\|\bm{g}\|_q \ge \frac{C(\bm{x})}{\operatorname{dist}_p(\bm{u}(\bm{x}),V_c(\bm{x}))}$.
\end{proof}

\begin{theorem}[First-order incompatibility]
\label{thm:incompatibility}
Under (A1)--(A3), with positive probability,
\[
\|\bm{g}^{\star}(\bm{x})\|_q \;<\; \|\bm{g}^{\star}_{c}(\bm{x})\|_q.
\]
Enforcing $\bm{g}(\bm{x})\in V_c^\perp(\bm{x})$ strictly increases the minimal $\|\bm{g}(\bm{x})\|_q$ required to maintain the margin. By Lemma~\ref{lem:expansion} and the expansion~\eqref{eq:exp-mean}, this raises the minimal achievable first-order robust-risk coefficient of $R_p$.
\end{theorem}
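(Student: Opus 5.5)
The plan is to chain the two halves of Lemma~\ref{lem:minnorm} and then use (A3) to make the comparison strict. Work on the event where (A2) and (A3) hold simultaneously. If they hold only separately with positive probability, we read the assumptions as holding jointly, since (A3) is stated for the same $\bm{u}(\bm{x})$ that appears in the margin of (A2). On this event, Eq.~\eqref{eq:min-no-const} gives
\[
\|\bm{g}^{\star}(\bm{x})\|_q=\frac{C(\bm{x})}{\|\bm{u}(\bm{x})\|_p}.
\]
Eq.~\eqref{eq:min-with-const} gives
\[
\|\bm{g}^{\star}_c(\bm{x})\|_q\ge \frac{C(\bm{x})}{\operatorname{dist}_p(\bm{u}(\bm{x}),V_c(\bm{x}))}.
\]
The denominator of the second bound is finite and strictly positive, because the distance lies in $(0,\|\bm{u}\|_p)$.

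The strict inequality then follows from (A3) and $C(\bm{x})>0$. Since $0<\operatorname{dist}_p(\bm{u},V_c)<\|\bm{u}\|_p$, we have
\[
\frac{C}{\operatorname{dist}_p}>\frac{C}{\|\bm{u}\|_p}.
\]
Hence $\|\bm{g}^{\star}(\bm{x})\|_q<\|\bm{g}^{\star}_c(\bm{x})\|_q$ on an event of positive probability.

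The point needing care is that $\bm{g}^{\star}_c$ actually exists, so that its norm equals the minimum and the lower bound applies to it. Existence follows by restricting to the closed set $\{\bm{g}\in V_c^\perp:\langle\bm{g},\bm{u}\rangle\ge C\}$ intersected with a large $\ell_q$ ball; compactness and continuity of the norm then give a minimizer. This requires the feasible set to be nonempty. We check this by noting that $\bm{u}\notin V_c$ (from $\operatorname{dist}_p>0$, with $V_c$ closed as a finite-dimensional subspace). Consequently the orthogonal projection $\bm{w}$ of $\bm{u}$ onto $V_c^\perp$ satisfies $\langle \bm{w},\bm{u}\rangle=\|\bm{w}\|_2^2>0$. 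Scaling $\bm{w}$ gives a feasible point.

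For the final sentence of the theorem, apply Lemma~\ref{lem:expansion}. The first-order coefficient of $R_p$ is $\mathbb{E}[\|\bm{g}(\bm{x})\|_q]$. Suppose the gradient field is forced to satisfy the margin (A2) and color invariance pointwise. Then the coefficient is bounded below by $\mathbb{E}[\|\bm{g}^{\star}_c(\bm{x})\|_q]$ on the relevant event, instead of by $\mathbb{E}[\|\bm{g}^{\star}(\bm{x})\|_q]$. Because the pointwise gap is strict on a set of positive measure, the expectations of the lower bounds differ strictly, provided they are finite, which the integrability part of (A1) supplies. This is an interpretation of the minimal achievable coefficient, taken as an infimum over admissible gradient fields, so no further estimate is needed.
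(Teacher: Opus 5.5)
Your proof is correct and follows the paper's argument: combine the equality in Eq.~\eqref{eq:min-no-const} with the lower bound in Eq.~\eqref{eq:min-with-const}, then use $0<\operatorname{dist}_p(\bm{u}(\bm{x}),V_c(\bm{x}))<\|\bm{u}(\bm{x})\|_p$ from (A3) together with $C(\bm{x})>0$ to get strictness. Your additions go beyond the paper, which simply asserts existence of $\bm{g}^{\star}_{c}(\bm{x})$ in Lemma~\ref{lem:minnorm} and leaves the last sentence as an interpretive remark. These additions are sound: existence of $\bm{g}^{\star}_{c}(\bm{x})$ via compactness with the projection-based feasibility check, the need for a common (A2)/(A3) event, and the expectation-level reading of the last sentence. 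One small correction: (A1) does not control $\|\bm{g}^{\star}_{c}(\bm{x})\|_q$, but strict comparison of the expectations only needs $\mathbb{E}[\|\bm{g}^{\star}(\bm{x})\|_q]$ on that event to be finite, and that follows from integrability of the actual margin-satisfying gradient.
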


\begin{proof}
By Lemma~\ref{lem:minnorm}, from Eq.~\eqref{eq:min-no-const}, $\|\bm{g}^{\star}(\bm{x})\|_q = \frac{C(\bm{x})}{\|\bm{u}(\bm{x})\|_{p}}$.
From Eq.~\eqref{eq:min-with-const},
$\|\bm{g}^{\star}_{c}(\bm{x})\|_q \ge \frac{C(\bm{x})}{\operatorname{dist}_p(\bm{u}(\bm{x}),V_c(\bm{x}))}$.
By Assumption~(A3), we have $\operatorname{dist}_p(\bm{u}(\bm{x}),V_c(\bm{x})) < \allowbreak \|\bm{u}(\bm{x})\|_{p}$, hence the claim.
\end{proof}

\noindent\textbf{Bidirectional conflict and negative transfer.}
Theorem~\ref{thm:incompatibility} elucidates why a joint objective leads to suboptimal trade-offs.
The geometric intuition is illustrated in Fig.~\ref{fig:first-order-geometry} for the Euclidean case ($p=2$).
When the task direction $\bm{u}(\bm{x})$ is misaligned with the color subspace $V_c(\bm{x})$, enforcing invariance imposes a cost.
\noindent\emph{Color robustness harms $\ell_p$ robustness:} Achieving color robustness requires driving $\bm{g}(\bm{x})$ toward the orthogonal complement $V_c^\perp(\bm{x})$.
This constraint forces the gradient away from the unconstrained optimum $\bm{g}^{\star}(\bm{x})$, increasing the minimal required gradient norm (as $\|\bm{g}^{\star}_c(\bm{x})\|_q > \|\bm{g}^{\star}(\bm{x})\|_q$).
This raises the lower bound on the first-order term of $R_p$.
\noindent\emph{$\ell_p$ robustness harms color robustness:} Conversely, minimizing $R_p$ pushes $\bm{g}(\bm{x})$ toward $\bm{g}^{\star}(\bm{x})$.
By Theorem~\ref{thm:incompatibility}, $\bm{g}^{\star}(\bm{x})$ does not lie in $V_c^\perp(\bm{x})$.
Therefore, the restricted norm $\|\bm{g}^{\star}(\bm{x})\|_{q;V_c(\bm{x})}$ is strictly positive, meaning the first-order color risk does not vanish.
Consequently, JAT can trade off the two objectives, producing suboptimal solutions compared to specialized single-objective training when the incompatibility is pronounced.

\begin{figure*}[t]
\centering
\includegraphics[width=1.0\linewidth]{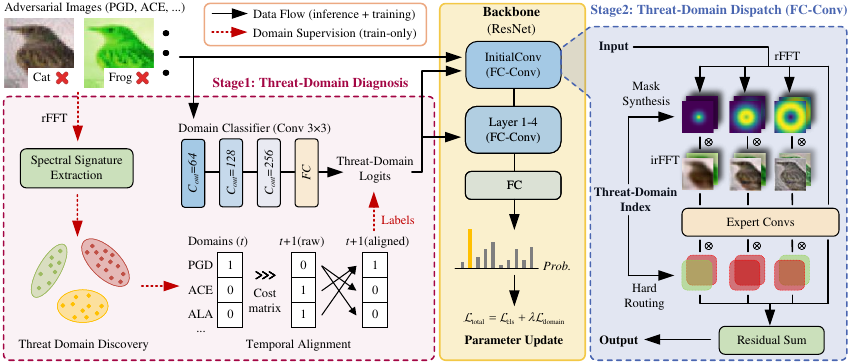}
\Description{A block diagram showing the two-stage workflow of the TaFD defense framework: Diagnosis and Dispatch.}
\caption{\textbf{Overview of the TaFD Defense Framework.} TaFD employs a two-stage Diagnosis--Dispatch architecture. \textbf{Stage 1 (Diagnosis):} Spectral signatures are extracted via rFFT and clustered to discover latent threat domains; the resulting assignments are stabilized via Temporal Alignment and supervise a lightweight Domain Classifier that predicts the Threat-Domain Index. \textbf{Stage 2 (Dispatch):} Conditioned on this index, FC-Conv modules synthesize threat-domain-specific spectral masks in the frequency domain (rFFT/irFFT), dispatch masked features to the corresponding expert, and fuse outputs with a residual pathway, thereby enforcing frequency-grounded structural parameter separation and mitigating optimization conflicts.}
\label{fig:workflow}
\end{figure*}

\section{Methodology} \label{sec:methodology}
Our theoretical analysis establishes a fundamental incompatibility: heterogeneous threats impose conflicting demands on the input-gradient field, leading to robustness trade-offs. This intrinsic conflict necessitates a shift from unified optimization to explicit decoupling of these competing objectives. Guided by the observation that conflicting threats leave separable signatures in the frequency domain, such as high-frequency texture perturbations versus low-frequency color shifts, we implement this decoupling strategy by recasting joint defense as a Diagnosis--Dispatch framework. The diagnosis stage identifies the latent threat domain of each sample, generating a compact conditioning signal. The dispatch stage leverages this signal to conditionally process spectral components via a novel Frequency-Conditional Convolution (FC-Conv) module, thereby enforcing structured parameter separation and mitigating gradient conflicts. Fig.~\ref{fig:workflow} illustrates the complete workflow of our TaFD defense framework.

\subsection{Threat-Domain Diagnosis via Data-Driven Attack-Type Partitioning}
\label{sec:domain_diagnosis}
The threat-domain diagnosis stage aims to generate accurate sample-level conditioning signals for subsequent dispatch. However, reliance on explicit attack labels is insufficient.
While treating each attack type as a distinct domain may appear straightforward, the expanding attack landscape would lead to unbounded model complexity. Moreover, distinct attacks often exhibit substantial spectral overlap, rendering dedicated per-attack resources redundant.
Robust conditioning signals must therefore emerge from a more principled partitioning based on spectral characteristics. To this end, we develop a data-driven approach: first, we define a discriminative spectral feature space where attacks exhibit separable signatures (Section~\ref{sec:spectral_signatures}); second, we learn stable attack-type prototypes and perform unsupervised clustering to discover latent threat domains (Section~\ref{sec:prototype_learning}). This automated partitioning, coupled with temporal alignment (Section~\ref{sec:temporal_alignment}), provides high-confidence supervision for a lightweight classifier (Section~\ref{sec:lightweight_classifier}) that generates conditioning signals for guiding specialized processing flows (Section~\ref{sec:frequency_dispatch}).

\subsubsection{Spectral Signatures of Adversarial Attacks} \label{sec:spectral_signatures}
Heterogeneous threats exhibit distinct spectral signatures in the frequency domain. We capture these via a feature extraction function $\bm{\phi}(\cdot)$ that maps input $\bm{x}$ to a compact feature vector $\bm{\phi}(\bm{x}) \in \mathbb{R}^{N_\phi}$ characterizing its spectral energy distribution, where $N_\phi$ denotes the feature dimension. These features establish a stable, discriminative space for unsupervised threat-domain partitioning.

Given an input image $\bm{x} \in \mathbb{R}^{H \times W \times 3}$, we apply the two-dimensional rFFT $\mathcal{F}_R(\bm{x})$, then convert the RGB spectrum to YCbCr color space in the frequency domain, decoupling luminance (Y) from chrominance (Cb, Cr). We extract $\bm{\phi}(\bm{x})$ comprising handcrafted spectral statistics from the YCbCr energy distribution: (i) normalized energy proportions within low-, mid-, and high-frequency bands for luminance and chrominance channels; (ii) chroma angles characterizing hue per band; and (iii) chroma dominance metrics quantifying relative energy between Cb and \allowbreak Cr channels.

\subsubsection{Attack-Type Prototype Learning and Threat-Domain Discovery} \label{sec:prototype_learning}
To capture the spectral signature of each known attack type, we maintain a set of attack-type prototypes $\{\bm{p}_i\}_{i=1}^N$, where $N$ denotes the number of distinct attack types in the training set $\mathcal{T} = \{T_1, \dots, T_N\}$. Each prototype $\bm{p}_i \in \mathbb{R}^{N_\phi}$ represents the centroid of attack type $T_i$'s distribution in the spectral feature space, updated via exponential moving average.

At training iteration \(t\), we define $\bar{\bm{\phi}}_i^{(t)}$ as the average feature vector for samples corresponding to attack type $T_i$ in the current batch. The prototype for $T_i$ is updated as:
\begin{equation*}
\bm{p}_i^{(t)} = \beta \,\bm{p}_i^{(t-1)} + (1-\beta)\, \bar{\bm{\phi}}_i^{(t)},
\end{equation*}
where $\beta \in [0,1)$ is the momentum coefficient. Periodically (every $U$ iterations), we cluster the stabilized prototypes to discover latent threat domains. We standardize prototypes, project them to a lower-dimensional subspace ($d=4$) via PCA, then apply K-means clustering with $K$ target domains. This yields a mapping $\mathcal{M}^{(t)}:\mathcal{T}\rightarrow\{1,\dots,K\}$ assigning threat-domain indices to each attack type $T_i\in\mathcal{T}$.

\subsubsection{Temporal Alignment via Hungarian Matching} \label{sec:temporal_alignment}
Raw mappings $\mathcal{M}^{(t)}$ obtained from iterative unsupervised clustering exhibit label switching, whereby cluster indices permute arbitrarily across successive updates. To preserve stable supervisory signals throughout training, we align each raw mapping $\mathcal{M}^{(t)}$ with the preceding stabilized mapping $\hat{\mathcal{M}}^{(t-1)}$ by applying the Hungarian algorithm to identify the optimal permutation that minimizes cross-iteration assignment discrepancies.

We construct a mismatch cost matrix $\bm{C}^{\text{cost}} \in \mathbb{R}^{K \times K}$, where each element $C_{ij}^{\text{cost}}$ quantifies the disagreement between the current assignment to cluster $i$ and the previous stable assignment to cluster $j$:
\begin{equation*}
    C_{ij}^{\text{cost}} = \left|\left\{ T_k \in \mathcal{T} \mid \mathcal{M}^{(t)}(T_k)=i, \ \hat{\mathcal{M}}^{(t-1)}(T_k)\neq j \right\}\right|,
\end{equation*}
where $|\cdot|$ denotes set cardinality. Thus, $C_{ij}^{\text{cost}}$ represents the number of attack types currently assigned to cluster $i$ that were not previously assigned to stable domain $j$.

The Hungarian algorithm determines the optimal permutation matrix $\bm{P}^*$ by minimizing the total alignment cost over the set of all $K \times K$ permutation matrices $\mathcal{P}_K$:
\begin{equation*}
    \bm{P}^* = \operatorname*{arg\,min}_{\bm{P} \in \mathcal{P}_K} \sum_{i=1}^K \sum_{j=1}^K C_{ij}^{\text{cost}} P_{ij}.
\end{equation*}
Applying this optimal permutation to the raw cluster indices yields the temporally aligned mapping $\hat{\mathcal{M}}^{(t)}$. This procedure ensures that each threat-domain index maintains correspondence with a consistent subset of attack types throughout training, thereby stabilizing the optimization landscape.

\subsubsection{Lightweight Domain Classifier} \label{sec:lightweight_classifier}
The stabilized mapping $\hat{\mathcal{M}}^{(t)}$ provides reliable pseudo-labels that reflect the latent threat-domain structure. We leverage this mapping as supervision for training a lightweight domain classifier $G$ parameterized by $\bm{\theta}_G$. Implemented as a shallow convolutional neural network, the classifier adopts a streamlined architecture comprising three convolutional blocks (each consisting of convolution, batch normalization, ReLU activation, dropout, and max pooling) followed by global average pooling and a two-layer fully connected classification head. The classifier is trained by minimizing the cross-entropy loss over the training distribution $\mathcal{S}$, which comprises adversarial examples $\bm{x}$ and their corresponding generating attack types $T$. The stabilized mapping $\hat{\mathcal{M}}^{(t)}(T)$ serves as the pseudo-label:
\begin{equation}
  \mathcal{L}_{\text{domain}} = \mathbb{E}_{(\bm{x}, T) \sim \mathcal{S}} \big[ \mathcal{H}\big(G(\bm{x}; \bm{\theta}_G), \hat{\mathcal{M}}^{(t)}(T)\big) \big],
  \label{eq-diagnoser-loss}
\end{equation}
where $\mathcal{H}$ denotes the cross-entropy loss. During inference, the classifier predicts the threat-domain index $k^*_{\text{pred}} \in \{1, \dots, K\}$ for input $\bm{x}$ via:
\begin{equation*}
 k^*_{\text{pred}} = \operatorname*{arg\,max}_k \ [G(\bm{x}; \bm{\theta}_G)]_k,
\end{equation*}
where $[\cdot]_k$ denotes the $k$-th element of the output vector.

\subsection{Threat-Domain-Dispatch via Frequency-Conditional Convolution} \label{sec:frequency_dispatch}
The dispatch stage utilizes the diagnosed threat-domain index to activate specialized processing pathways tailored to the discovered threat domain. To achieve this specialization, we introduce the FC-Conv module, which strategically replaces conventional spatial convolutions at key hierarchical stages. The FC-Conv module processes an input feature map $\bm{X}$ to produce an output $\bm{Y}$. It performs a conditional, two-stage process guided by the diagnosed threat-domain index, which we denote generically as $k^*$. The first stage modulates the network's spectral response via threat-domain-conditioned spectral masking (Section~\ref{sec:Domain_Conditioned_Spectral_Masking}), operating in the frequency domain via the channel-wise real-valued Fast Fourier Transform $\mathcal{F}_R$. In the second stage, following transformation back to the spatial domain using the inverse rFFT $\mathcal{F}_R^{-1}$, the module dispatches the features to a dedicated threat-domain expert via sample-level hard routing (Section~\ref{sec:Dedicated_Expert_Processing}).

\subsubsection{Threat-Domain-Conditioned Spectral Masking via Compact Basis Expansion} \label{sec:Domain_Conditioned_Spectral_Masking}
To achieve threat-domain-specific specialization, the frequency spectrum must be adaptively decomposed per the conditioning signal $k^*$, requiring $K$ distinct spectral masks. However, directly learning these masks in the high-dimensional frequency domain requires numerous parameters, causing overfitting and non-smooth masks that induce ringing.

To address these challenges, we adopt a compact, basis-driven parameterization, synthesizing masks via a low-dimensional coefficient space conditioned on $k^*$. This approach inherently enforces smoothness constraints and significantly reduces the parameter count. We employ pre-defined Zernike polynomials $\{B_l(\boldsymbol{\omega})\}_{l=1}^{N_{\text{basis}}}$ as basis functions, where $\boldsymbol{\omega}$ denotes 2D frequency coordinates and $l$ indexes the basis. This basis is constructed from a radial degree ($n$) and an azimuthal degree ($m$), where $N_{\text{basis}}$ is the total number of $(n,m)$ combinations (a hyperparameter defining mask complexity). Zernike polynomials form an orthogonal basis over the unit disk, making them well-suited for capturing these radial and angular patterns. We normalize the rFFT output frequency coordinates so that the maximum radius maps to 1, conforming to the unit disk domain.

Given the conditioning signal $k^*$, the spectral logits $S_k(\boldsymbol{\omega}; k^*)$ for expert $k \in \{1, \dots, K\}$ are synthesized as a linear combination of these basis functions:
\begin{equation}
S_k(\boldsymbol{\omega}; k^*) = \sum_{l=1}^{N_{\text{basis}}} \alpha_{k,l}(k^*) B_l(\boldsymbol{\omega}) + b_k(k^*).
\label{eq-basis-expansion}
\end{equation}
The coefficients $\alpha_{k,l}(k^*)$ and bias $b_k(k^*)$ are learnable parameters retrieved from an embedding layer indexed by $k^*$. We define the spectral mask tensors $\{\bm{M}_k(k^*)\}_{k=1}^K$, whose elements $M_k(\boldsymbol{\omega};k^*)$ are obtained via Softmax over the $K$ expert logits at each frequency $\boldsymbol{\omega}$. This ensures the masks form a partition of unity, $\sum_{k=1}^{K} M_k(\boldsymbol{\omega};k^*) = 1$ for all $\boldsymbol{\omega}$:
\begin{equation}
M_k(\boldsymbol{\omega}; k^*) = \frac{\exp(S_k(\boldsymbol{\omega}; k^*))}{\sum_{j=1}^{K}\exp(S_j(\boldsymbol{\omega}; k^*))}.
\label{eq-spectral-softmax}
\end{equation}

\subsubsection{Dedicated Expert Processing and Sample-Level Hard Routing} \label{sec:Dedicated_Expert_Processing}
The generated mask tensors decompose the spectrum $\hat{\bm{X}}=\mathcal{F}_R(\bm{X})$ into $K$ specialized components via element-wise product:
\begin{equation}
    \hat{\bm{X}}_k = \hat{\bm{X}} \odot \bm{M}_k(k^*),
    \label{eq-decomposition}
\end{equation}
where $\hat{\bm{X}}$ is complex-valued from the rFFT, while $\bm{M}_k(k^*)$ are real-valued non-negative masks that scale spectral magnitudes while preserving phase and broadcast across channels. These components are then transformed back via inverse rFFT, producing $\bm{X}_k=\mathcal{F}_R^{-1}(\hat{\bm{X}}_k)$. We maintain $K$ dedicated experts $\{\mathcal{E}_k\}_{k=1}^{K}$ with one-to-one correspondence to threat domains. Each resulting spatial component $\bm{X}_k$ is subsequently processed by its corresponding expert: $\bm{Y}_k=\mathcal{E}_k(\bm{X}_k)$.

Crucially, while spectral masking decomposes features, mitigating negative transfer requires decoupling optimization. We enforce structural parameter separation via sample-level hard routing: given conditioning signal $k^*$, the module selects the output of the corresponding expert $\mathcal{E}_{k^*}$. Formally, the routed output $\bm{Y}_{\text{routed}}$ is expressed using the indicator function $\mathbb{I}(\cdot) \in \{0, 1\}$:
\begin{equation}
    \bm{Y}_{\text{routed}} = \sum_{k=1}^{K} \mathbb{I}(k=k^*) \cdot \bm{Y}_k.
    \label{eq-hard-routing-indicator}
\end{equation}
Since the indicator function is non-differentiable, we employ BPDA~\cite{DBLP:conf/icml/AthalyeC018} with a softmax-weighted proxy during adversarial evaluation to ensure accessible gradients.

To ensure optimization stability with this hard routing mechanism, the source of the conditioning signal $k^*$ differs between training and inference. During training, we utilize the stabilized pseudo-label derived from the known attack type $T$: $k^* = \hat{\mathcal{M}}^{(t)}(T)$. During inference, we use the classifier's prediction $k^* = k^*_{\text{pred}}$. This hard routing mechanism is core to our decoupling strategy. It ensures that for any given input, only the parameters of the designated expert are activated and updated, directly mitigating optimization conflicts and promoting threat-domain specialization. The final output of the FC-Conv combines the routed expert response with a parallel residual pathway:
\begin{equation}
    \bm{Y} = \operatorname{Conv}_{\text{res}}(\bm{X}) + \bm{Y}_{\text{routed}},
    \label{eq-final-output}
\end{equation}
where $\operatorname{Conv}_{\text{res}}$ is a standard convolution matching dimensions with $\bm{Y}_{\text{routed}}$.

\subsection{Architectural Integration and Objective}\label{sec:architecture_optimization}
To decouple heterogeneous threats while maintaining efficiency, we adopt a hierarchical injection strategy. We replace the standard convolution at the entry point of each feature-extraction stage in the network hierarchy with an FC-Conv module. This strategy applies to both the initial network stem and the first convolutional layer of each subsequent feature block, while all other layers retain standard convolutions. This placement ensures that frequency-domain decoupling precedes feature processing within each stage, thereby reducing feature entanglement across abstraction levels.

The framework is trained end-to-end via joint optimization of the integrated classification backbone and the lightweight domain classifier. The total loss combines the primary classification loss $\mathcal{L}_{\text{cls}}$ and the auxiliary threat-domain diagnosis loss $\mathcal{L}_{\text{domain}}$ (Eq.~\eqref{eq-diagnoser-loss}):
\begin{equation}
    \mathcal{L}_{\text{total}} = \mathcal{L}_{\text{cls}} + \lambda \mathcal{L}_{\text{domain}},
    \label{eq-total-loss}
\end{equation}
where $\lambda>0$ balances the two objectives. The primary classification loss, $\mathcal{L}_{\text{cls}}$, is defined as the standard cross-entropy loss computed on the generated adversarial examples against their true labels. This joint objective simultaneously supervises the domain classifier to provide accurate threat-domain signals while driving the backbone to leverage these signals for robust classification.

\section{Experiments}
\label{sec:experiments}

This section presents a comprehensive evaluation of the proposed TaFD, designed to address key challenges in heterogeneous joint adversarial training (JAT). Our experimental design focuses on the following four research questions (RQs):

\begin{itemize}
    \item \textbf{RQ1 (Robustness against Canonical Heterogeneous Threats):} How effectively does TaFD achieve balanced robustness against the canonical union of $\ell_p$-bounded and color threats compared to state-of-the-art methods?
    \item \textbf{RQ2 (Generalizability to Broader Threats):} Does the robustness of TaFD generalize to broader heterogeneous threat unions encompassing spatial and parametric threats?
    \item \textbf{RQ3 (Robustness of the Diagnosis Module):} Does attacking the diagnosis module compromise TaFD's robustness?
    \item \textbf{RQ4 (Ablation Studies and Hyperparameter Sensitivity Analysis):} What are the contributions of TaFD's core components, and how does the number of latent threat domains ($K$) impact robustness?
\end{itemize}

\subsection{Experimental Setup}
\subsubsection{Datasets and Network Architectures}
We evaluate TaFD on CIFAR-10/100~\cite{krizhevsky2009learning} and Tiny-ImageNet~\cite{le2015tiny} (resized to $32\times 32$ following~\cite{DBLP:journals/pami/ZhangYSY24}). To assess generalization across architectures, we integrate TaFD into ResNet-34~\cite{DBLP:conf/cvpr/HeZRS16}, a convolutional architecture, and MobileViT-XS~\cite{DBLP:conf/iclr/MehtaR22}, a hybrid transformer. Within the TaFD framework, FC-Conv modules replace the primary $3\times3$ spatial convolution at two key positions: the network stem and the first block of each feature extraction stage. Specifically, for ResNet-34, replacements occur at the stem and the first residual block in layers 1--4; for MobileViT-XS, replacements occur at the stem and the first MV2Block of each stage.

\subsubsection{Threat Models}
For RQ1 (Canonical Heterogeneous Threats), we consider the union of $\ell_p$-bounded attacks ($\ell_\infty$-APGD and $\ell_2$-APGD~\cite{DBLP:conf/icml/Croce020a}) and diverse semantic color transformations: ACE~\cite{DBLP:journals/tifs/00010L23}, ALA~\cite{DBLP:conf/mm/0001S0JZFLP23}, HSVAdv~\cite{DBLP:conf/cvpr/HosseiniP18}, ReColorAdv~\cite{DBLP:conf/nips/LaidlawF19}, and RetouchUAA~\cite{DBLP:journals/tcsv/XieHQF25}. For RQ2 (Diverse Heterogeneous Unions), we further evaluate TaFD on unions involving spatial flow-based transformations (StAdv~\cite{DBLP:conf/iclr/XiaoZ0HLS18}) and parametric on-manifold attacks (GPGD~\cite{DBLP:journals/pr/XiaoYFWL25}).

\subsubsection{Baselines}
We compare TaFD with state-of-the-art defense methods, including JAT approaches such as Mean-AT~\cite{tramer2019adversarial}, MNG~\cite{DBLP:conf/icml/MadaanSH21}, GBN~\cite{liu2024towards}, RAMP~\cite{jiang2024ramp}, and PUAT~\cite{DBLP:journals/pami/ZhangYSY24}, as well as recent frequency-domain defenses FACE~\cite{niu2023defense} and DAT~\cite{DBLP:conf/nips/LiLW0024}. We also extend the widely used TRADES~\cite{DBLP:conf/icml/ZhangYJXGJ19} for the JAT setting, since its objective explicitly balances natural accuracy and adversarial robustness.

\subsubsection{Implementation Details}
We train models with Adam for $75$ epochs with batch size $128$ and initial learning rate $0.001$, decayed by $10\times$ at epochs $50$ and $70$. For JAT, we generate one adversarial example per attack type per clean input and optimize objective Eq.~\eqref{eq-total-loss} with $\lambda=1.0$.

\noindent\textbf{Baseline Training Strategies.}
Mean-AT, TRADES, MNG, GBN, RAMP, and DAT are trained with the same JAT strategy as TaFD. For PUAT and FACE, we follow their original training protocols, as their architectures are incompatible with standard multi-threat JAT: PUAT relies on GAN-based distribution alignment, while FACE employs frequency-adaptive input reconstruction.

\noindent\textbf{Attack Parameters.}
During training, we use 10-step attacks for efficiency. For rigorous evaluation, we employ 100-step AutoPGD for $\ell_\infty$ and $\ell_2$ attacks; all remaining attacks retain their training-time configurations. For $\ell_\infty$ PGD, $\varepsilon=8/255$ with step size $\alpha=2/255$; for $\ell_2$ PGD, $\varepsilon=0.5$ with step size $\alpha=0.1$. On CIFAR, color attack step sizes are $0.01$ for ReColorAdv, $0.1$ for RetouchUAA, and $1.0$ for all others. On Tiny-ImageNet, these step sizes are scaled by $0.1$ to maintain perceptual plausibility given the lack of explicit perturbation bounds. For all other attack hyperparameters, we adhere to the default settings provided in their official implementations. To preclude gradient masking, all attacks are generated with BPDA enabled for TaFD's hard routing.

\noindent\textbf{TaFD Hyperparameters.}
We set $K=6$ threat domains. For the diagnosis stage, the prototype momentum is $\beta=0.99$, and the domain mapping is updated every $U=50$ iterations. Spectral feature extraction and prototype updates are performed only during the first 10 epochs to stabilize the spectral prototypes. FC-Conv modules use Zernike bases up to radial and azimuthal degree $3$ ($n_{\max}=m_{\max}=3$; $N_{\text{basis}}=10$).

\subsubsection{Evaluation Metrics}
We report Clean Accuracy (CA) and Robust Accuracy (RA) for each evaluated attack. Average Robust Accuracy (Avg-RA) over the threat union is our primary metric of balanced robustness.

\begin{table*}[t!]
\centering
\footnotesize
\setlength{\tabcolsep}{3.5pt}
\renewcommand{\arraystretch}{1.0}
\caption{\textbf{Robustness against Canonical Threats on ResNet-34 and MobileViT-XS.}
Clean and robust accuracy (\%) on CIFAR-10/100 and Tiny-ImageNet under $\ell_\infty$-APGD, $\ell_2$-APGD, and color-based attacks (ACE, ALA, HSVAdv, ReColorAdv, RetouchUAA). \textbf{Avg-RA} averages the seven attacks excluding Clean. Cells in \begingroup\setlength{\fboxsep}{1.2pt}\legendbox{bestcol}{gold}\endgroup/\begingroup\setlength{\fboxsep}{1.2pt}\legendbox{secondcol}{light gold}\endgroup\ mark the column-wise \begingroup\setlength{\fboxsep}{1.2pt}\legendbox{bestcol}{best}\endgroup/\begingroup\setlength{\fboxsep}{1.2pt}\legendbox{secondcol}{second-best}\endgroup.}
\label{tab:results_combined_archetypal}
\begin{tabular*}{\linewidth}{@{\extracolsep{\fill}}lllccccccccc}
\toprule
\multirow{2}{*}{\textbf{Dataset}} & \multirow{2}{*}{\textbf{Model}} & \multirow{2}{*}{\textbf{Method}} & \multirow{2}{*}{\textbf{Clean}} & \multicolumn{2}{c}{$\ell_p$} & \multicolumn{5}{c}{\textbf{Color Attacks}} & \multirow{2}{*}{\textbf{Avg-RA}} \\
\cmidrule(lr){5-6}\cmidrule(lr){7-11}
& & & & $\ell_\infty$-APGD & $\ell_2$-APGD & ACE & ALA & HSVAdv & ReColorAdv & RetouchUAA & \\
\midrule
\multirow{18}{*}{\textbf{\textit{CIFAR-10}}} & \multirow{9}{*}{ResNet-34} & Mean-AT~\cite{tramer2019adversarial} & \second{93.7} & \second{26.0} & 47.5 & \second{61.6} & \second{74.7} & \second{59.2} & \second{49.7} & \second{64.1} & \second{54.7} \\
 & & TRADES~\cite{DBLP:conf/icml/ZhangYJXGJ19} & \best{94.2} & 21.8 & 42.1 & 58.6 & 73.6 & 55.8 & \best{51.5} & 62.1 & 52.2 \\
 & & MNG~\cite{DBLP:conf/icml/MadaanSH21} & 92.7 & 1.7 & 17.8 & 60.2 & 71.5 & 54.6 & 6.7 & 60.6 & 39.0 \\
 & & RAMP~\cite{jiang2024ramp} & 85.0 & 3.3 & 25.6 & 38.4 & 58.0 & 45.7 & 2.8 & 40.2 & 30.6 \\
 & & GBN~\cite{liu2024towards} & 90.5 & 0.1 & 1.2 & 50.2 & 54.8 & 40.2 & 22.3 & 51.5 & 31.5 \\
 & & PUAT~\cite{DBLP:journals/pami/ZhangYSY24} & 79.3 & 25.5 & \best{54.3} & 1.2 & 17.1 & 3.5 & 7.1 & 1.2 & 15.7 \\
 & & FACE~\cite{niu2023defense} & 93.4 & 0.0 & 0.0 & 2.0 & 5.8 & 28.4 & 0.0 & 3.3 & 5.6 \\
 & & DAT~\cite{DBLP:conf/nips/LiLW0024} & 61.1 & 0.2 & 7.0 & 20.5 & 32.0 & 19.2 & 0.7 & 20.5 & 14.3 \\
 & & \textbf{TaFD (Ours)} & \second{93.7} & \best{34.4} & \second{51.0} & \best{69.6} & \best{80.2} & \best{68.7} & 47.2 & \best{70.3} & \best{60.2} \\
\cmidrule(lr){2-12}
& \multirow{9}{*}{MobileViT-XS} & Mean-AT & 84.5 & 38.9 & \second{66.5} & 35.9 & 47.8 & 30.5 & 25.6 & 30.3 & 39.4 \\
& & TRADES & 89.7 & 29.7 & 64.7 & 38.3 & \second{51.9} & \second{32.4} & 16.7 & \second{34.1} & 38.3 \\
& & MNG & 83.2 & \second{40.1} & \best{67.1} & \second{40.0} & 49.8 & 32.3 & \second{30.8} & 33.6 & \second{42.0} \\
& & RAMP & 70.5 & 35.8 & 53.9 & 4.6 & 19.9 & 7.9 & 24.3 & 4.6 & 21.6 \\
& & GBN & 65.5 & 0.0 & 0.0 & 1.5 & 10.8 & 2.8 & 0.1 & 1.7 & 2.4 \\
& & PUAT & 80.2 & 24.8 & 53.8 & 1.5 & 16.8 & 3.8 & 6.0 & 1.3 & 15.4 \\
& & FACE & \second{91.7} & 0.0 & 1.3 & 1.0 & 28.5 & 4.8 & 0.1 & 1.5 & 5.3 \\
& & DAT & 69.8 & 33.5 & 54.2 & 27.0 & 34.3 & 21.8 & 23.5 & 22.2 & 30.9 \\
& & \textbf{TaFD (Ours)} & \best{92.7} & \best{40.8} & 63.4 & \best{52.6} & \best{69.8} & \best{47.5} & \best{47.0} & \best{51.9} & \best{53.3} \\
\midrule
\multirow{18}{*}{\textbf{\textit{CIFAR-100}}} & \multirow{9}{*}{ResNet-34} & Mean-AT~\cite{tramer2019adversarial} & 73.5 & 7.3 & 24.3 & \second{30.3} & 44.9 & \second{26.2} & 16.7 & 29.8 & \second{25.6} \\
 & & TRADES~\cite{DBLP:conf/icml/ZhangYJXGJ19} & \best{77.2} & 6.6 & 18.0 & 29.3 & \second{47.5} & 25.9 & \best{21.5} & \second{30.7} & \second{25.6} \\
 & & MNG~\cite{DBLP:conf/icml/MadaanSH21} & 71.7 & 4.7 & 16.4 & \second{30.3} & 43.6 & 25.3 & 8.5 & 29.4 & 22.6 \\
 & & RAMP~\cite{jiang2024ramp} & 51.3 & \best{25.5} & \best{36.7} & 2.9 & 8.7 & 3.3 & 14.4 & 1.8 & 13.3 \\
 & & GBN~\cite{liu2024towards} & 65.0 & 0.0 & 0.2 & 15.7 & 21.2 & 11.2 & 2.7 & 16.8 & 9.7 \\
 & & PUAT~\cite{DBLP:journals/pami/ZhangYSY24} & 50.2 & 8.5 & 26.0 & 0.3 & 5.3 & 1.2 & 1.3 & 0.2 & 6.1 \\
 & & FACE~\cite{niu2023defense} & 74.5 & 0.0 & 0.0 & 0.3 & 9.5 & 0.8 & 0.0 & 0.4 & 1.6 \\
 & & DAT~\cite{DBLP:conf/nips/LiLW0024} & 51.0 & 1.8 & 9.8 & 12.2 & 22.4 & 12.4 & 2.7 & 11.9 & 10.5 \\
 & & \textbf{TaFD (Ours)} & \second{76.9} & \second{13.6} & \second{33.8} & \best{37.4} & \best{51.2} & \best{33.4} & \second{19.8} & \best{34.2} & \best{31.9} \\
\cmidrule(lr){2-12}
& \multirow{9}{*}{MobileViT-XS} & Mean-AT & 60.7 & 17.8 & \second{40.1} & 13.0 & 21.8 & 9.8 & 8.8 & 10.0 & 17.3 \\
& & TRADES & 66.8 & 11.2 & 37.3 & 12.5 & \second{23.2} & 9.7 & 4.7 & 9.9 & 15.5 \\
& & MNG & 57.9 & 19.1 & 39.8 & \second{13.9} & 21.5 & \second{10.2} & 10.0 & \second{10.6} & \second{17.9} \\
& & RAMP & 40.1 & \second{21.3} & 30.1 & 3.2 & 7.3 & 3.9 & \second{11.7} & 2.3 & 11.4 \\
& & GBN & 22.9 & 0.0 & 0.0 & 0.0 & 1.8 & 0.6 & 0.0 & 0.2 & 0.4 \\
& & PUAT & 44.9 & 8.9 & 19.9 & 0.3 & 5.8 & 1.2 & 1.7 & 0.2 & 5.4 \\
 & & FACE & \second{67.9} & 0.0 & 0.1 & 0.2 & 8.9 & 0.8 & 0.0 & 0.2 & 1.5 \\
& & DAT & 50.9 & 17.2 & 34.9 & 9.1 & 16.6 & 7.2 & 8.2 & 6.6 & 14.3 \\
& & \textbf{TaFD (Ours)} & \best{71.4} & \best{22.4} & \best{41.1} & \best{21.6} & \best{39.2} & \best{19.3} & \best{21.8} & \best{21.0} & \best{26.6} \\
\midrule
\multirow{18}{*}{\textbf{\textit{Tiny-ImageNet}}} & \multirow{9}{*}{ResNet-34} & Mean-AT~\cite{tramer2019adversarial} & 45.1 & 3.6 & \best{20.0} & 12.1 & 25.2 & 4.6 & \second{18.6} & 10.1 & \second{13.5} \\
 & & TRADES~\cite{DBLP:conf/icml/ZhangYJXGJ19} & \best{51.0} & 1.0 & 10.4 & 11.8 & \best{28.1} & 4.7 & 12.9 & 10.7 & 11.4 \\
 & & MNG~\cite{DBLP:conf/icml/MadaanSH21} & 43.5 & 4.4 & \second{19.9} & \best{13.7} & 25.3 & 5.3 & \best{19.2} & \second{11.0} & \best{14.1} \\
 & & RAMP~\cite{jiang2024ramp} & 30.5 & \best{11.7} & 17.5 & 1.6 & 12.8 & 1.0 & 18.5 & 2.0 & 9.3 \\
 & & GBN~\cite{liu2024towards} & 36.3 & 0.0 & 0.0 & 3.9 & 15.4 & 1.2 & 0.9 & 3.7 & 3.6 \\
 & & PUAT~\cite{DBLP:journals/pami/ZhangYSY24} & 2.0 & 0.4 & 1.1 & 0.3 & 1.1 & 0.5 & 1.0 & 0.4 & 0.7 \\
 & & FACE~\cite{niu2023defense} & 47.3 & 0.0 & 0.0 & 0.4 & 0.3 & \best{18.0} & 0.4 & 0.4 & 2.8 \\
 & & DAT~\cite{DBLP:conf/nips/LiLW0024} & 39.3 & 0.2 & 5.4 & \second{13.0} & 24.6 & \second{6.5} & 11.8 & \best{12.3} & 10.5 \\
 & & \textbf{TaFD (Ours)} & \second{49.3} & \second{6.3} & 19.8 & 5.8 & \second{26.9} & 3.2 & 17.1 & 6.2 & 12.2 \\
\cmidrule(lr){2-12}
& \multirow{9}{*}{MobileViT-XS} & Mean-AT & 40.4 & 4.4 & \second{20.7} & 6.8 & 21.6 & 2.6 & \second{17.3} & 5.7 & 11.3 \\
& & TRADES & \second{45.8} & 1.0 & 15.4 & 7.5 & \second{23.9} & 2.7 & 14.7 & \second{6.7} & 10.3 \\
& & MNG & 38.4 & 4.2 & 20.4 & \second{7.7} & 21.3 & \second{3.6} & 16.8 & 6.4 & \second{11.5} \\
& & RAMP & 32.0 & \best{9.9} & 13.3 & 0.8 & 8.0 & 0.4 & 14.2 & 0.9 & 6.8 \\
& & GBN & 1.9 & 0.0 & 0.0 & 0.0 & 0.5 & 0.0 & 0.0 & 0.0 & 0.1 \\
& & PUAT & 0.9 & 0.6 & 0.7 & 0.5 & 0.6 & 0.5 & 0.6 & 0.5 & 0.6 \\
& & FACE & 45.5 & 0.0 & 0.2 & 0.5 & 18.1 & 0.3 & 0.5 & 0.4 & 2.9 \\
& & DAT & 27.0 & 5.0 & 15.4 & 3.5 & 13.8 & 1.6 & 12.4 & 3.2 & 7.8 \\
& & \textbf{TaFD (Ours)} & \best{47.4} & \second{7.5} & \best{21.5} & \best{16.4} & \best{29.2} & \best{6.5} & \best{23.0} & \best{15.3} & \best{17.1} \\
\bottomrule
\end{tabular*}
\end{table*}

\begin{table*}[t!]
\centering
\footnotesize
\setlength{\tabcolsep}{3.5pt}
\renewcommand{\arraystretch}{1.0}
\caption{\textbf{Generalization to Broader Heterogeneous Threats.}
Clean and robust accuracy (\%) of ResNet-34 and MobileViT-XS on CIFAR-10/100 and Tiny-ImageNet under $\ell_\infty$-APGD, $\ell_2$-APGD, ACE, StAdv, and GPGD. \textbf{Avg-RA} averages the five attacks excluding Clean. Cells in \begingroup\setlength{\fboxsep}{1.2pt}\legendbox{bestcol}{gold}\endgroup/\begingroup\setlength{\fboxsep}{1.2pt}\legendbox{secondcol}{light gold}\endgroup\ mark the column-wise \begingroup\setlength{\fboxsep}{1.2pt}\legendbox{bestcol}{best}\endgroup/\begingroup\setlength{\fboxsep}{1.2pt}\legendbox{secondcol}{second-best}\endgroup.}
\label{tab:robustness_results_avg_attackonly}
\begin{tabular*}{\linewidth}{@{\extracolsep{\fill}}llccccccc ccccccc}
\toprule
\multirow{3}{*}{\textbf{Dataset}} & \multirow{3}{*}{\textbf{Method}} & \multicolumn{7}{c}{\textbf{ResNet-34}} & \multicolumn{7}{c}{\textbf{MobileViT-XS}} \\
\cmidrule(lr){3-9} \cmidrule(lr){10-16}
& & Clean & $\ell_\infty$-APGD & $\ell_2$-APGD & ACE & StAdv & GPGD & \textbf{Avg-RA} & Clean & $\ell_\infty$-APGD & $\ell_2$-APGD & ACE & StAdv & GPGD & \textbf{Avg-RA} \\
\midrule
\multirow{9}{*}{\textbf{\textit{CIFAR-10}}} & Mean-AT & 92.1 & 19.5 & 50.7 & \second{59.8} & \second{40.6} & 69.0 & \second{47.9} & 77.0 & 39.4 & \second{63.6} & 28.9 & 19.4 & 42.5 & 38.8 \\
 & TRADES & \best{94.7} & 18.6 & 51.8 & 56.5 & 38.8 & \second{69.2} & 47.0 & \best{91.7} & 20.7 & 50.6 & \second{33.2} & \second{21.6} & \second{48.2} & 34.9 \\
 & RAMP & \second{94.0} & 19.6 & 41.6 & 42.8 & 36.4 & 60.3 & 40.1 & 68.6 & \second{40.8} & 55.8 & 2.1 & 11.1 & 38.3 & 29.6 \\
 & GBN & 83.9 & 1.8 & 8.1 & 43.0 & 34.8 & 14.5 & 20.4 & 41.5 & 0.0 & 0.1 & 2.5 & 1.3 & 2.2 & 1.2 \\
 & PUAT & 79.3 & 25.5 & \second{54.3} & 1.2 & 3.6 & 31.5 & 23.2 & 80.2 & 24.8 & 53.8 & 1.5 & 1.9 & 29.5 & 22.3 \\
 & MNG & 82.8 & \best{39.9} & \best{65.5} & 47.7 & 20.4 & 51.1 & 44.9 & 75.9 & \best{42.0} & \best{64.1} & 27.3 & 20.3 & 44.6 & \second{39.7} \\
 & FACE & 93.4 & 0.0 & 0.0 & 2.0 & 0.0 & 12.7 & 2.9 & \best{91.7} & 0.0 & 1.3 & 1.0 & 0.1 & 7.8 & 2.0 \\
 & DAT & 77.5 & 1.8 & 17.2 & 20.8 & 3.0 & 35.3 & 15.6 & 51.4 & 11.0 & 31.2 & 10.6 & 5.8 & 20.6 & 15.8 \\
 & \textbf{TaFD (Ours)} & 93.5 & \second{33.3} & 51.0 & \best{67.5} & \best{41.1} & \best{75.3} & \best{53.6} & \second{90.2} & 32.8 & 58.7 & \best{51.5} & \best{38.4} & \best{70.5} & \best{50.4} \\
\midrule
\multirow{9}{*}{\textbf{\textit{CIFAR-100}}} & Mean-AT & 68.5 & 8.4 & 27.1 & 24.2 & \second{20.5} & \second{35.3} & \second{23.1} & 51.1 & \second{20.8} & \best{37.8} & 7.2 & 10.8 & \second{20.8} & \second{19.5} \\
 & TRADES & \best{76.2} & 5.8 & 23.4 & \second{25.7} & 19.0 & 33.6 & 21.5 & \second{67.5} & 9.8 & 31.1 & \second{9.3} & 11.3 & 17.4 & 15.8 \\
 & RAMP & 55.0 & \best{24.0} & \best{38.6} & 1.3 & 5.4 & 25.2 & 18.9 & 34.9 & 18.1 & 22.8 & 0.9 & \second{15.3} & 16.9 & 14.8 \\
 & GBN & 56.4 & 1.1 & 4.5 & 11.5 & 16.0 & 5.6 & 7.7 & 2.2 & 0.0 & 0.0 & 0.2 & 0.3 & 0.4 & 0.2 \\
 & PUAT & 50.2 & 8.5 & 26.0 & 0.3 & 3.4 & 11.9 & 10.0 & 44.9 & 8.9 & 19.9 & 0.3 & 3.3 & 13.7 & 9.2 \\
 & MNG & 67.7 & 0.3 & 4.4 & 7.1 & 10.5 & 21.2 & 8.7 & 47.1 & \best{22.0} & \second{36.1} & 5.7 & 10.2 & 18.2 & 18.4 \\
 & FACE & \second{74.5} & 0.0 & 0.0 & 0.3 & 0.0 & 3.1 & 0.7 & \best{67.9} & 0.0 & 0.1 & 0.2 & 0.0 & 1.1 & 0.3 \\
 & DAT & 54.9 & 3.0 & 14.2 & 10.5 & 12.0 & 20.2 & 12.0 & 43.3 & 17.7 & 31.6 & 4.3 & 10.3 & 18.1 & 16.4 \\
 & \textbf{TaFD (Ours)} & 73.4 & \second{15.2} & \second{31.3} & \best{34.0} & \best{22.2} & \best{41.4} & \best{28.8} & 66.4 & 14.7 & 32.9 & \best{17.2} & \best{19.9} & \best{38.6} & \best{24.7} \\
\midrule
\multirow{9}{*}{\textbf{\textit{Tiny-ImageNet}}} & Mean-AT & 37.0 & 4.2 & 12.9 & \second{7.0} & \second{8.9} & \second{11.7} & \second{8.9} & 25.9 & \best{7.9} & \best{17.4} & 1.1 & \second{3.3} & \second{8.4} & \second{7.6} \\
 & TRADES & \best{50.3} & 2.7 & 14.2 & 5.8 & 2.9 & 4.6 & 6.0 & \second{44.2} & 2.9 & \second{16.3} & \second{3.2} & 3.0 & 5.7 & 6.2 \\
 & RAMP & \second{48.9} & 2.2 & 10.2 & 3.7 & 5.2 & 9.0 & 6.1 & 16.6 & 3.9 & 9.8 & 0.2 & 1.9 & 6.1 & 4.4 \\
 & GBN & 28.2 & 0.1 & 0.2 & 1.9 & 4.6 & 1.1 & 1.6 & 1.1 & 0.0 & 0.0 & 0.0 & 0.0 & 0.0 & 0.0 \\
 & PUAT & 2.0 & 0.4 & 1.1 & 0.3 & 0.7 & 0.8 & 0.7 & 0.9 & 0.6 & 0.7 & 0.5 & 0.5 & 0.6 & 0.6 \\
 & MNG & 36.2 & \second{4.4} & \second{15.3} & 3.9 & 2.9 & 10.0 & 7.3 & 23.3 & \second{7.4} & 15.8 & 1.1 & \second{3.3} & 7.5 & 7.0 \\
 & FACE & 47.3 & 0.0 & 0.0 & 0.4 & 0.0 & 0.1 & 0.1 & \best{45.5} & 0.0 & 0.2 & 0.5 & 0.0 & 0.2 & 0.2 \\
 & DAT & 16.1 & 1.9 & 6.8 & 0.6 & 2.1 & 3.7 & 3.0 & 11.6 & 3.9 & 7.5 & 0.3 & 2.7 & 3.7 & 3.6 \\
 & \textbf{TaFD (Ours)}  & 44.9 & \best{8.2} & \best{18.4} & \best{15.0} & \best{12.3} & \best{17.9} & \best{14.4} & 41.0 & 5.7 & 14.3 & \best{10.4} & \best{10.4} & \best{18.9} & \best{11.9} \\
\bottomrule
\end{tabular*}
\end{table*}

\subsection{RQ1: Robustness against Canonical Heterogeneous Threats}\label{sec:rq1}

This section evaluates the balanced robustness of TaFD under the union of canonical heterogeneous threats, comprising $\ell_p$-bounded attacks and semantic color attacks. As established in Section~\ref{sec:theory-neg-transfer-en}, these two threat classes exhibit gradient incompatibility, posing substantial challenges to JAT.
Experimental results are summarized in Table~\ref{tab:results_combined_archetypal}. TaFD achieves the highest Avg-RA in five of the six dataset--architecture settings while maintaining the highest or second-highest Clean Accuracy throughout. The advantage is most pronounced on CIFAR-100, where it attains 31.9\% Avg-RA on ResNet-34, surpassing the tied runner-up by 6.3 percentage points, and 26.6\% on MobileViT-XS, exceeding the second-best by 8.7 percentage points. On CIFAR-10, it likewise leads with 60.2\% and 53.3\% Avg-RA on the two architectures. On Tiny-ImageNet with MobileViT-XS, it achieves 17.1\% Avg-RA, outperforming the second-best by 5.6 percentage points. On Tiny-ImageNet with ResNet-34, TaFD ranks second to MNG at 12.2\% versus 14.1\%.

These results show that existing approaches cannot reconcile gradient incompatibility across heterogeneous threats. Aggregation-based JAT methods such as RAMP, MNG, and Mean-AT all compress heterogeneous gradients into a single pixel-domain update direction. When semantic color attacks join the threat union, this incompatibility intensifies sharply, directly corroborating the analysis in Section~\ref{sec:theory-neg-transfer-en}. On CIFAR-100 with ResNet-34, RAMP attains the highest $\ell_\infty$-APGD accuracy at 25.5\% yet collapses to 2.9\% on ACE. GBN introduces perturbation-specific BN branches, but this separation is confined to normalization layers and the shared convolutional parameters remain subject to the same conflict, producing an Avg-RA of only 9.7\% on CIFAR-100 with ResNet-34. PUAT models unrestricted adversarial examples via a triple-GAN, but its generative process cannot capture the structured patterns of color attacks, achieving only 1.2\% on ACE and 3.5\% on HSVAdv on CIFAR-10 with ResNet-34.
FACE and DAT both leverage spectral processing but remain threat-agnostic. FACE applies unified low-frequency downsampling and high-frequency squeezing, while DAT mixes training amplitudes with distractor images. Neither adapts its processing to the underlying threat type, resulting in Avg-RA values of only 1.6\% and 10.5\% on CIFAR-100 with ResNet-34. By contrast, TaFD conditions its spectral masks and expert routing on the diagnosed threat domain, decomposing the intractable joint objective into independently optimizable sub-problems and achieving balanced robustness where both categories of baselines fail.

\subsection{RQ2: Generalizability to Broader Heterogeneous Threats}\label{sec:rq2}

This section evaluates the generalizability of TaFD by extending the threat union to include spatial attacks (StAdv) and parametric attacks (GPGD). Compared with the $\ell_p$-color heterogeneity evaluated in RQ1, this union spans four threat dimensions, posing a more challenging multi-directional optimization conflict.
As shown in Table~\ref{tab:robustness_results_avg_attackonly}, TaFD achieves the highest Avg-RA across all six dataset--architecture settings, leading the runner-up by 4.3 to 10.7 percentage points. On CIFAR-10, TaFD attains 53.6\% and 50.4\% Avg-RA on ResNet-34 and MobileViT-XS. On CIFAR-100, it reaches 28.8\% and 24.7\%. On Tiny-ImageNet, it achieves 14.4\% and 11.9\%.

The expanded union further intensifies the robustness imbalance among baselines. On CIFAR-100 with ResNet-34, RAMP leads on $\ell_p$ attacks with 24.0\% on $\ell_\infty$-APGD and 38.6\% on $\ell_2$-APGD, but collapses to 1.3\% on ACE and 5.4\% on StAdv. MNG on the same setting shows the opposite pattern, with $\ell_\infty$-APGD at only 0.3\% and an overall Avg-RA of 8.7\%. This pronounced trade-off confirms that multi-dimensional heterogeneity makes a single pixel-domain gradient direction even less capable of reconciling competing objectives. FACE and DAT likewise fail to benefit from the expanded threat diversity, achieving Avg-RA of only 0.7\% and 12.0\% on CIFAR-100 with ResNet-34.

The uniform advantage across this broader union suggests that spectral separability extends beyond the canonical $\ell_p$-color pair. As shown in Fig.~\ref{fig:tsne_visualization}, StAdv and GPGD form well-separated clusters in the frequency domain, with signatures clearly distinct from $\ell_p$-bounded and color attacks. This separability enables effective threat-domain routing even under four-dimensional heterogeneity, providing a principled and scalable basis for defending against diverse threat unions.

\subsection{RQ3: Robustness of the Diagnosis Module}
\label{sec:rq3}

To assess the robustness of TaFD when adversaries are fully aware of the routing mechanism, we formulate an adaptive attack that jointly optimizes for misclassification and domain-classifier evasion. In a complete white-box setting with access to the true threat-domain mapping, the adversary minimizes $\mathcal{L}_{adv} = \mathcal{L}_{cls} +   \lambda_{atk} \mathcal{L}_{domain}$, where $\mathcal{L}_{cls}$ drives misclassification and $\mathcal{L}_{domain}$ forces the domain classifier to produce an incorrect threat-domain prediction. The coefficient $\lambda_{atk}$ controls the relative emphasis on evasion versus misclassification; setting $\lambda_{atk}=0$ recovers the standard attack that does not target the domain classifier. Results on CIFAR-100 with ResNet-34 across all seven attack types are reported in Fig.~\ref{fig:adaptive_radar}.

The left panel of Fig.~\ref{fig:adaptive_radar} shows that increasing $\lambda_{atk}$ progressively degrades the overall domain accuracy, confirming that adversaries can exploit spectral manipulation to evade the domain classifier. The degree of degradation varies substantially across attacks: ALA retains high domain accuracy even at $\lambda_{atk}=10$, as its perturbations are intrinsically confined to low-frequency lightness variations, whereas $\ell_\infty$-APGD and ReColorAdv drop to near-zero domain accuracy, indicating that the domain-evasion objective substantially alters their spectral characteristics. Despite the successful evasion of the domain classifier, the right panel of Fig.~\ref{fig:adaptive_radar} shows that robust accuracy does not degrade for any of the seven attacks as $\lambda_{atk}$ grows. In particular, the effect is most pronounced for attacks whose domain accuracy degrades most sharply, such as $\ell_\infty$-APGD, indicating that targeting the diagnosis module does not yield a stronger attack than the baseline attack ($\lambda_{atk}=0$) under this adaptive objective.

We attribute this result to two complementary mechanisms. The first is a self-neutralization effect: since routing decisions are governed by frequency-domain characteristics, deceiving the domain classifier compels the adversary to reshape the perturbation's spectral profile toward the target threat domain, thereby weakening its adversarial potency as the modified perturbation falls within the defense scope of the receiving domain expert. The second is an optimization trade-off: the domain-evasion objective introduces gradient conflict with the misclassification objective, resulting in a suboptimal perturbation for classification within the same perturbation budget. The latter mechanism also explains why ALA, whose domain accuracy remains largely unaffected, still exhibits improved robustness under increasing $\lambda_{atk}$: although the evasion loss $\mathcal{L}_{domain}$ fails to mislead the domain classifier, its gradient nonetheless conflicts with the misclassification gradient, producing a less effective perturbation for the classification task. Together, these two mechanisms explain why targeting the diagnosis module under this adaptive objective does not produce an attack stronger than the standard baseline.
\begin{figure}[t!]
\centering
\includegraphics[width=1\linewidth]{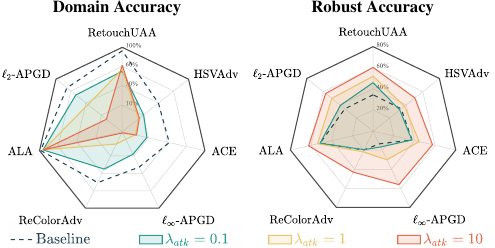}
\Description{Radar charts evaluating the robustness of TaFD under adaptive attacks targeting the domain classifier and classification.}
\caption{Robustness of TaFD under adaptive attacks jointly targeting the domain classifier and classification on CIFAR-100 with ResNet-34. \textbf{Left:} Domain accuracy per attack type. \textbf{Right:} Robust accuracy per attack type. Increasing $\lambda_{atk}$ progressively degrades domain accuracy but does not reduce robust accuracy below the standard attack ($\lambda_{atk}=0$).}
\label{fig:adaptive_radar}
\end{figure}

\subsection{RQ4: Ablation Studies and Hyperparameter Sensitivity Analysis}\label{sec:rq4}
We analyze the contribution of each TaFD component through systematic ablation and examine the sensitivity to the number of threat domains~$K$.

\subsubsection{Component Ablation}
Four variants systematically modify the Diagnosis and Dispatch stages on CIFAR-100 with ResNet-34, with results in Table~\ref{tab:ablation_studies}.

In the Diagnosis stage, Variant~1 (\emph{w/o Threat-Domain Diagnosis}) disables threat-domain conditioning entirely: the domain classifier, prototype learning, and clustering pipeline are removed, and FC-Conv processes all inputs through a single, threat-agnostic frequency-domain pathway. Avg-RA drops from 31.9\% to 26.0\%, with $\ell_\infty$-APGD falling from 13.6\% to 10.0\% and ACE from 37.4\% to 28.5\%, confirming that threat-agnostic processing cannot reconcile heterogeneous threats. Variant~2 (\emph{w/o Hungarian Alignment}) retains prototype learning and clustering but removes temporal Hungarian matching. Without stable cluster-to-domain mappings, the domain assignments for each attack type shift across epochs, causing experts to receive conflicting gradient signals that prevent effective specialization. This variant suffers the largest degradation among all ablations: Avg-RA collapses to 18.1\%, falling 7.9 percentage points below Variant~1, confirming that temporal assignment consistency is critical for stable threat-domain specialization in TaFD.

In the Dispatch stage, Variant~3 (\emph{w/o Basis-Param\-e\-ter\-ized Mask}) directly learns full-resolution spectral masks instead of parameterizing them through the low-dimensional Zernike basis. This leads to an Avg-RA decline to 30.3\%, with $\ell_2$-APGD dropping sharply from 33.8\% to 23.8\%, indicating that the Zernike basis serves as a structural regularizer that prevents overfitting in the high-dimensional spectral mask space. Variant~4 (\emph{w/o Frequency Decoupling}) removes rFFT/irFFT and spectral masking entirely, reducing FC-Conv to a spatial Mixture-of-Experts with Avg-RA of 29.4\%. In particular, color attacks degrade, e.g., HSVAdv drops from 33.4\% to 30.6\%, indicating that frequency-domain processing is necessary for separating threat-specific spectral patterns that spatial routing alone cannot distinguish.

\begin{table*}[t!]
\centering
\footnotesize
\setlength{\tabcolsep}{3.8pt}
\renewcommand{\arraystretch}{1.0}
\caption{Ablation of TaFD components on CIFAR-100 with ResNet-34.}
\label{tab:ablation_studies}
\begin{tabular}{llccccccccc}
\toprule
\textbf{Component}
  & \textbf{Method (Variant)} & Clean & $\ell_\infty$-APGD & $\ell_2$-APGD & ACE & ALA & HSVAdv & ReColorAdv & RetouchUAA & \textbf{Avg-RA} \\
\midrule
\multirow{2}{*}{Diagnosis}
& 1.\ w/o Threat-Domain Diagnosis
  & 73.0 & 10.0 & 25.8 & 28.5 & 43.2 & 25.7 & 19.9 & 29.0 & 26.0 \\
& 2.\ w/o Hungarian Alignment
  & 68.7 & 11.1 & 27.3 & 16.0 & 31.8 & 11.3 & 11.1 & 17.8 & 18.1 \\
\multirow{2}{*}{Dispatch}
& 3.\ w/o Basis-Parameterized Mask
  & 75.5 & 15.3 & 23.8 & 35.6 & 50.8 & 32.5 & 19.8 & 34.6 & 30.3 \\
& 4.\ w/o Frequency Decoupling
  & 75.0 & 15.1 & 23.6 & 34.1 & 50.2 & 30.6 & 19.8 & 32.2 & 29.4 \\
\midrule
\multicolumn{2}{l}{\textbf{TaFD (Full Model)}}
  & \textbf{76.9} & \textbf{13.6} & \textbf{33.8} & \textbf{37.4} & \textbf{51.2} & \textbf{33.4} & \textbf{19.8} & \textbf{34.2} & \textbf{31.9} \\
\bottomrule
\end{tabular}
\end{table*}

\subsubsection{Sensitivity to the Number of Threat Domains}
We evaluate the impact of the number of latent threat domains, $K$, which governs the degree of threat-aware specialization. Specifically, it determines: (i) the granularity of threat domain partitioning in the diagnosis stage, and (ii) the number of frequency band partitions and domain-expert convolutions in the dispatch stage. As illustrated in the left panel of Fig.~\ref{fig:threat_domain_k}, setting $K=1$ constrains TaFD to operate as a unified model without threat-aware specialization, producing the lowest Avg-RA of 26.0\%. Introducing decoupling substantially enhances robustness; the transition from $K=1$ to $K=2$ alone produces a 3.7-point gain, demonstrating the efficacy of threat-aware specialization. As $K$ increases further, performance generally improves with minor fluctuations and peaks at $K=6$, achieving 76.9\% clean accuracy and 31.9\% Avg-RA.
Beyond $K=6$, the trend exhibits diminishing returns: at $K=7$, Avg-RA slightly decreases to 31.5\% and clean accuracy drops to 76.2\%. This behavior indicates that excessively fine-grained partitioning elevates the difficulty for the diagnosis module, degrading the quality of the conditioning signals without providing additional specialization benefits.

Concurrently, the right panel of Fig.~\ref{fig:threat_domain_k} shows that training time scales approximately linearly with $K$, increasing from 49.0 to 81.7 minutes as $K$ grows from 1 to 7. In contrast, GPU memory overhead remains negligible, rising only from 9.85 to 9.99~GB (under 1.5\% increase), as the additional expert parameters constitute a small fraction of the total model. We select $K=6$ as the default setting, which achieves the best robustness and clean accuracy with modest computational overhead.

\begin{figure}[t]
\centering
\includegraphics[width=\linewidth]{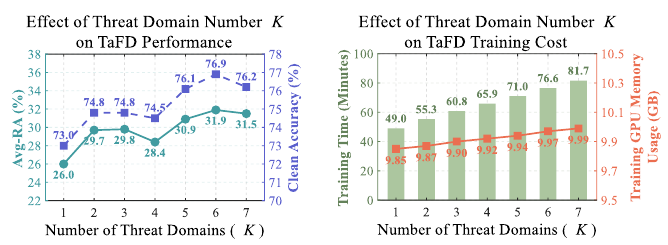}
\Description{Line graphs showing the effect of the number of threat domains on average robust accuracy and computational costs.}
\caption{Effect of the number of threat domains $K$ on TaFD performance and training cost. The left panel shows average robust accuracy (Avg-RA) and clean accuracy, while the right panel shows training GPU memory usage and training time.}
\label{fig:threat_domain_k}
\end{figure}

\section{Discussion}
\label{sec:discussion}

\noindent\textbf{Computational Overhead.}
TaFD introduces a diagnosis module and per-domain experts whose cost scales linearly with the number of threat domains $K$. As analyzed in Section~\ref{sec:rq4}, at the default setting $K{=}6$ the GPU memory overhead is less than 1.5\% (9.85$\to$9.99\,GB), and the total training time is approximately 81.7 minutes on a single GPU. This overhead is reasonable for the research setting. Future work can further reduce it through parameter sharing across experts, lightweight expert designs such as depthwise separable convolutions, and more compact FC-Conv implementations.

\noindent\textbf{Theoretical Generalizability.}
Our first-order analysis is built on subspace projection geometry and a margin constraint, and its core structure does not depend on the specific parameterization of the attack family. Although Theorem~\ref{thm:incompatibility} is instantiated with the $\ell_p$-color pair as a canonical case, the framework applies whenever a threat class can be modeled as perturbations within a subspace $V$. Empirically, the results in Section~\ref{sec:rq2} confirm that TaFD generalizes effectively to spatial and parametric threats beyond this canonical pair. Extending the formal analysis to broader threat families such as spatial deformations and generative perturbations, and investigating the feasibility of certified robustness guarantees in the frequency-decoupled setting, are promising directions.

\section{Conclusion}
\label{sec:conclusion}

This paper addresses negative transfer in joint adversarial training under heterogeneous threats. Through first-order gradient analysis, we formalize this conflict as gradient incompatibility, showing that enforcing robustness to one threat class can raise the first-order robust-risk coefficient of the other under the stated assumptions, theoretically motivating decoupled optimization. We further show that representative attacks spanning diverse threat paradigms exhibit well-separated spectral signatures in the frequency domain, establishing the frequency domain as a natural representation space for disentangling conflicting threats.

Building on these insights, we propose TaFD, a Diagnosis--Dispatch framework that reformulates JAT as a frequency-domain divide-and-conquer paradigm. Experiments on three benchmarks and two architectures demonstrate that TaFD achieves more balanced robustness than existing baselines, with an average robust accuracy gain of up to 11.3 percentage points over the strongest baseline while preserving competitive clean accuracy. This advantage generalizes to broader threat unions encompassing spatial and parametric attacks, and adaptive attacks targeting the domain classifier do not degrade robust accuracy below the standard baseline in our evaluation.

\bibliographystyle{ACM-Reference-Format}
\bibliography{main}

\appendix

\section{Open Science}
To facilitate reproducibility and support the open science initiative, we enumerate the artifacts necessary for evaluating the core contributions of this paper.
\begin{itemize}
    \item \textbf{Source Code:} The complete implementation of TaFD, including model definitions, training scripts, and all attack implementations, is available upon publication.
    \item \textbf{Evaluation Scripts:} Scripts to reproduce the training and evaluation pipeline for all experimental configurations reported in this paper are provided.
    \item \textbf{Pre-trained Models:} Pre-trained checkpoints are not included due to storage limitations. All results can be reproduced from scratch using the provided training scripts and configurations.
    \item \textbf{Datasets:} This research utilizes the publicly available CIFAR-10, CIFAR-100~\cite{krizhevsky2009learning}, and Tiny-ImageNet~\cite{le2015tiny} datasets, which are standard benchmarks and can be automatically downloaded via the provided scripts.
    \item \textbf{Availability:} The source code and evaluation scripts will be made publicly available upon publication.
\end{itemize}

\section{Ethical Considerations}
This work focuses on enhancing the robustness of deep neural networks against heterogeneous adversarial attacks, specifically bridging the gap between $\ell_p$-bounded and semantic threats.

\begin{description}
    \item[Stakeholders:] The primary stakeholders are developers, system administrators, and end-users of safety-critical computer vision systems who rely on robust visual perception in real-world deployments. A secondary group includes potential adversaries who might analyze our frequency-domain insights to design adaptive attacks.

    \item[Harms:] The detailed analysis of spectral signatures presents a theoretical dual-use risk. Advanced adversaries could leverage these frequency-domain insights to engineer perturbations that are more elusive to general detection mechanisms.

    \item[Mitigations:] However, our adaptive attack evaluation in Section~\ref{sec:rq3} empirically demonstrates that TaFD remains robust even when adversaries explicitly target the diagnosis mechanism, with robust accuracy not falling below the non-adaptive baseline.

    \item[Decision:] We conclude that the societal benefit of establishing a defense framework that can reconcile conflicting threat models outweighs the marginal risk of disclosing spectral vulnerabilities, as effective defense requires understanding these fundamental incompatibilities.
\end{description}
\end{document}